\newtheorem{thm}{Theorem}
\def\argmax{\operatornamewithlimits{argmax}}
\newtheorem{theorem}{Theorem}[section]
\newtheorem{corollary}{Corollary}[thm]
\newtheorem{assumption}{Assumption}
\begin{document}

\title{Potential-Based Intrinsic Motivation: Preserving Optimality With Complex, Non-Markovian Shaping Rewards}

\author{\name Grant C. Forbes \email grantc4bes@gmail.com  \\
        \AND
       \name Leonardo Villalobos-Arias \email lvillal@ncsu.edu \\
       \AND
       \name Jianxun Wang \email jwang75@ncsu.edu \\
       \AND
       \name Arnav Jhala \email ahjhala@ncsu.edu \\
        \AND
        \name David L. Roberts \email dlrober4@ncsu.edu \\
        }

\maketitle

\begin{abstract}
Recently there has been a proliferation of intrinsic motivation (IM) reward-shaping methods to learn in complex and sparse-reward environments. These methods can often inadvertently change the set of optimal policies in an environment, leading to suboptimal behavior. Previous work on mitigating the risks of reward shaping, particularly through potential-based reward shaping (PBRS), has not been applicable to many IM methods, as they are often complex, trainable functions themselves, and therefore dependent on a wider set of variables than the traditional reward functions that PBRS was developed for. We present an extension to PBRS that we prove preserves the set of optimal policies under a more general set of functions than has been previously proven. We also present {\em Potential-Based Intrinsic Motivation} (PBIM) and {\em Generalized Reward Matching} (GRM), methods for converting IM rewards into a potential-based form that are useable without altering the set of optimal policies. Testing in the MiniGrid DoorKey and Cliff Walking environments, we demonstrate that PBIM and GRM successfully prevent the agent from converging to a suboptimal policy and can speed up training. Additionally, we prove that GRM is sufficiently general as to encompass all potential-based reward shaping functions. This paper expands on previous work introducing the PBIM method, and provides an extension to the more general method of GRM, as well as additional proofs, experimental results, and discussion.
\end{abstract}

\section{Introduction}

An increasing amount of work in reinforcement learning (RL) uses intrinsic reward functions, in addition to environmental rewards, to speed convergence to reasonable policies. This approach is particularly widespread in sparse-reward problems, or those that are exploration-heavy, and has had much success in these domains \citep{bellemare2016unifying, pathak2017curiosity, burda2018exploration}.

However, adding a secondary reward term may lead to changes in the set of optimal policies, with unintended, and potentially adverse, consequences. For example, \citet{burda2018large} show that an intrinsic reward that incentivizes visiting areas of the state space where the agent is less able to predict what will happen can result in the agent becoming ``addicted'' to watching a screen with flashing random images. \citet{amodei2016concrete} further discuss related issues. 

These issues can be mitigated through hyperparameter tuning---i.e., by multiplying the intrinsic rewards by some $\alpha$ and decreasing $\alpha$ until the problematic behavior disappears. \citet{chen2022redeeming} implemented an automated generalization of this method. However, this approach may decrease the utility of intrinsic motivation and is not generally guaranteed to preserve the optimal policy set. As an alternative, we extend the potential-based, policy-preserving reward shaping term of \citet{ng1999policy} to arbitrary reward functions, and show that this preserves the set of optimal policies. We contribute:

1. \textbf{An extension of potential based reward shaping (PBRS) to potential functions of arbitrary variables in episodic environments, and a proof that this extension does not alter the set of optimal policies}. We derive an accompanying boundary condition that serves as a sufficient condition for preserving optimality, and this allows for extending PBRS to reward functions, like intrinsic motivation (IM), which are dependent on a more general set of variables than has been addressed previously. 

2. \textbf{A novel suite of methods for converting any arbitrary reward function into this extended potential form}, maintaining the benefits of that function while mitigating its drawbacks, by guaranteeing that such a shaping reward will not alter the set of optimal policies in the underlying environment, and thus cannot be ``hacked'' by an agent that has converged to an optimal policy. We first introduce our baseline method, PBIM, and then GRM, a more general suite of optimality-preserving conversion methods of which PBIM is a member. We also prove that \textit{any} PBRS-based optimality-preserving function must be writeable as a GRM function, and thus that GRM is sufficiently general so as to encompass all possible PBRS-based reward shaping functions.

3. \textbf{An empirical demonstration that our method is effective} as both a safety measure to prevent hacking of intrinsic rewards and, in certain cases, to speed up training. This includes both an empirical demonstration of the efficacy of PBIM, as well as an empirical demonstration of the efficacy of a subset of other GRM functions, some of which outperform both PBIM and the base intrinsic reward being used in our chosen environments.

This paper expands on work originally presented in \cite{forbes2024potential} and \cite{aaai24}.

\section{Related Work}\label{sec:related_work}

The two fields of study most directly relevant to our work are reward shaping---particularly potential-based reward shaping---and intrinsic motivation. We will review the former of these first, particularly the relevant theoretical extensions, followed by the latter.

\subsection{Reward Shaping, Potential-Based And Otherwise}
There are many problems to which we would like to find a solution via RL for which the actual reward signal we want to maximize is extremely sparse. This sparsity can make it prohibitively difficult for any classic reinforcement learning algorithm to actually learn if simply trained on that reward signal by itself. Thus, this sparse signal is often supplemented by an additional signal, so that the objective becomes to maximize
\begin{equation} \label{value_rs}
    V_{M}^\pi = \displaystyle \mathop{\mathbb{E}}_{a \sim \pi, s_0 \sim S_0, s \sim T} \sum_{t = 0}^{N-1} \gamma^t (R_t + F_t),
\end{equation}
where $F_t$ is the (ideally non-sparse) shaping function. The MDP $M = (S, S_0, A, T, \gamma, R)$ is thus replaced by a different, more tractable MDP $M'= (S, S_0, A, T, \gamma, R+F)$. 

Unfortunately, while reward shaping has had much success in allowing for agents to learn in sparse-reward environments, including SOTA success in complex POMDPs such as DOTA 2 \cite{berner2019dota}, it comes with the risk of altering the optimal policy of the underlying environment. When this happens, an agent learning to optimize Equation \ref{value_rs} may ``hack'' the shaping reward, and optimize for it at the expense of the true reward that we are interested in optimizing for, thus achieving arbitrarily bad performance. Examples of this abound in the literature, from an agent learning to ``short circuit'' a racetrack by repeatedly crashing into obstacles \cite{boathack} to a soccer-playing robot rewarded for touching a ball that learned to stand next to the ball and rapidly vibrate \cite{ng1999policy}. A bicycle-riding robot incentivized to move towards a goal was able to obtain infinite reward by riding in small circles, thus obtaining reward for moving near the goal but never reaching it \cite{randlov1998learning}. More such examples are easily obtainable. 
\cite{skalse2022defining} refers to this phenomenon as ``reward gaming,'' and  gives a thorough theoretical framework in which to analyze it.

\subsubsection{Potential-Based Reward Shaping}
Potential-Based Reward Shaping (PBRS) was coined in \citet{ng1999policy}, which examined infinite-horizon MDPs, and those with a single absorbing state that terminates the episode when reached by the agent. In these environments, they showed that the set of optimal policies is unchanged by the addition of a shaping reward of the form 
\begin{equation}\label{Ng_potential}
    F(s,s') = \gamma \Phi(s') - \Phi(s).
\end{equation}
Here, $\Phi(s')$ is an arbitrary potential function that is based solely on the state of the environment. When a shaping reward of this form is added, then the Q-function of the new MDP $M'$, becomes equal to the old Q-function, plus a potential with no action-dependence:
\begin{equation}\label{ng_reference}
     Q^*_{M'}(s,a) = Q^*_M(s,a) - \Phi(s).
\end{equation}
This is due to the telescoping sum of the $\Phi$ terms cancelling all but the first term in the sequence (a trick that is repeated in many later works in this area, including this one). Because there is no action-dependence here, choosing an action to maximize this quantity will also maximize the original Q-function:
\begin{equation}
    \argmax_a Q^*_{M'}(s,a) =  \argmax_a Q^*_{M}(s,a).
\end{equation}
Thus, the set of optimal policies of the underlying MDP is preserved.

This line of reasoning is worth belaboring because it, or one similar to it, will be the crux of many other works focused on preserving an optimal policy, including this work: optimality is conserved here because the new Q function is provably equal to the old Q function plus some quantity that is constant with respect to $a$, or more generally shares a set of maximum $a$.

\cite{ng1999policy} also suggests a choice of potential $\Phi(s) = V^*(s)$ when this quantity is known, and showed that using this as the potential is a gridworld environment improved the rate at which the agent learned to navigate to a series of goal states.

\citet{wiewiora2003principled} extends this treatment to reward shaping terms of the form
\begin{equation} \label{wiewora_potential}
F(s,a,s',a') = \gamma \Phi(s', a') - \Phi(s,a),
\end{equation} 
allowing for action-depended ``advice'' potentials: they note, however, that in order for the theoretical guarantees of \citet{ng1999policy} to hold, the potential $\Phi(s,a)$ must be added back to the Q-value during policy training.
Also note the dependence of Equation~\ref{wiewora_potential} on $a'$, which is necessary for this formulation to work, but requires a reward of the form $R(s,a,s',a')$, rather than the more standard $R(s,a,s')$.\footnote{\citet{wiewiora2003principled} also include a ``lookback'' formulation, wherein knowledge of $a'$ is not required, but rather $a_{t-1}$. This formulation, while not dependent on $a'$, still takes two action values as arguments, and is thus still nonstandard.} \citet{wiewiora2003potential} showed that this form of reward shaping parameter is theoretically equivalent to a thoughtful initialization of parameters for the policy representation. 

While \cite{ng1999policy}, and other prior work cited in this section, focus on model-free RL methods, \citet{asmuth2008potential} show that the formulation of \cite{ng1999policy} can be extended to model-based approaches, and serve to effectively decrease the number of trials required to learn an environment in that domain.

\citet{devlin2012dynamic} extend the formulation of \cite{ng1999policy} further, showing that any shaping reward of the form 
\begin{equation} \label{devlin_potential}
    F(s,t,s', t') = \gamma \Phi(s',t') - \Phi(s,t),
\end{equation}
where $t$ and $t'$ refer to the corresponding time steps of $s$ and $s'$, would not alter the set of optimal policies in a single-agent problem. They also, expanding on the work in \cite{devlin2011theoretical}, prove that it does not alter the set of Nash equilibria in a multi-agent problem. They also prove that, unlike Equation~\ref{wiewora_potential}, shaping rewards of this form cannot be reduced to parameter initialization using the same method as \citet{wiewiora2003potential}.

\citet{harutyunyan2015expressing} combine the expansions to \citet{ng1999policy} of \citet{wiewiora2003principled} and \citet{devlin2012dynamic} into the form
\begin{equation} \label{harutyanyun_potential}
    F(s,a,t,s',a',t') = \gamma \Phi(s',a',t')-\Phi(s,a,t)
\end{equation}
and demonstrate a method for converting shaping functions of the form $R(s,a,s')$ to this potential-based formulation\footnote{\citet{harutyunyan2015expressing} describe these as ``arbitrary reward functions,'' and this is true insofar as it applies to any reward function $R(s,a,s')$ that exists in a traditionally-defined MDP. This is required by the way the proof assumes convergence to a TD fixed point. This assumption is not guaranteed for rewards that are arbitrary in the wider sense we use, including most forms of IM---these are often not stable across time and thus a time-independent value function cannot be expected to converge.}. This is the first method in the literature that gives a ``plug-and-play'' formulation for converting a reward function into one that is potential-based. However, \citet{behboudian2022policy} prove that this formulation does not actually maintain an optimal policy in a general sense. It does maintain an optimal policy for the first action in a given trajectory, but subsequent actions may in fact be non-optimal with respect to the original reward function. 

To remedy this, \citet{behboudian2022policy} present PIES, an alternative method to preserve optimality. PIES preserves optimality by iteratively decreasing the coefficient of the shaping reward term throughout training until it reaches zero. While this alternative method does indeed preserve optimality, it does so effectively through removing $F$ entirely from a portion of the training, and thus requires a balance between utilizing the benefits of the shaping rewards and mitigating their drawbacks.\footnote{Compare to \citet{chen2022redeeming}, who tune shaping reward coefficients for IM specifically, but not to zero.}

All works cited thus far have focused on the initial domain of \citet{ng1999policy}, which is environments that are either infinite-horizon, or which have a set absorbing state that terminates the episode. This distinction, along with the focus on MDPs, is formally emphasized by \citet{eck2016potential}, who also make a further extension of PBRS to Partially-Observable Markov Decision Processes (POMDPs), and show that it can apply equally well in that setting, by defining a potential-based reward
\begin{equation} \label{belief_potential}
    F(b, b') = \gamma\Phi(b') - \Phi(b)
\end{equation}
that operates over the agent's belief states. 

In response to the shortcoming of PBRS when applied to episodic environments originally noted in \cite{eck2016potential}, \citet{grzes2017reward} extended PBRS to environments which terminate after some final time step $N$. They note that, in such an environment, when adding the shaping reward in Equation~\ref{Ng_potential}, there is an additional term of difference between the episodic returns of $M'$ and $M$:
\begin{equation}\label{Grzes_contribution}
    U_{M'}(\tau_N) = U_M(\tau_N) + \gamma^N\Phi(s_N) - \Phi(s_0),
\end{equation}
where $U_{M}(\tau_N)$ is the cumulative return of an agent on MDP $M$ under the state-action trajectory $\tau_N$. The latter of these terms, present even in the infinite-horizon case \cite{ng1999policy} as in Equation \ref{ng_reference}, has no action dependence, and thus cannot affect the set of optimal policies. The former, however, is implicitly action-dependent through the agent's ability to affect its final state, and thus poses a problem for maintaining the optimality of the learned policy with respect to the reward function of the original MDP. \citet{grzes2017reward} addresses this problem by adjusting the potential added at the end of the episode. The potential function essentially then becomes time-dependent, as defined in Equation~\ref{Ng_potential} at all time steps except the last of an episode, where it is zero. Formally:
\begin{equation} \label{grzes_potential}
    \Phi_n = \begin{cases}  0 &  \text{if } n = N \\  \Phi(s) &  \text{otherwise}.\end{cases}
\end{equation}
This treatment applies not only to fixed values of $N$, but to environments where a number of different states could be terminal states, and termination could happen at differing times: here $N$ is treated as ``the time at which the episode ends,'' and can freely vary from episode to episode. This truncating of the potential in the last time step ensures that the problematic term from Equation \ref{Grzes_contribution} will always equal zero, and thus restores the desired optimality guarantees.

\citet{goyal2019using} extends PBRS to a potential based on an ``action frequency vector,'' which contains information about the agent's trajectory over some slice of time, while preserving optimality.

We are extending the potential-based formulation further to accommodate potentials that are a function of an arbitrary set of variables. Most commonly, this will simply be $\Phi(\tau_{0}, \tau_{1}, ... \tau_{M})$, where $\Phi$ is the shaping potential, $M$ is the total number of episodes during training, and $\tau_m = (s_0, a_0, s_1, a_1, ... a_{N_m-1}, s_{N_m})$ is  the full trajectory of states and actions during episode $m$ of training. This is sufficiently general to accommodate most IM terms, such as ICM \cite{pathak2017curiosity} or RND \cite{burda2018exploration}. However, to emphasize a generality that could in principle extend beyond this, rather than writing the dependence explicitly (as in, i.e., the $\Phi(s,a,s',a')$ of \citet{wiewiora2003principled}), we will write either $F_n$ or $\Phi_n$ for simplicity of notation and to emphasize that we are dealing with arbitrary variable-dependence.

\subsection{Intrinsic Motivation}

Intrinsic Motivation (IM) is a paradigm in reinforcement learning that seeks to augment ``extrinsic'' sparse rewards with some automated ``intrinsic'' reward function. These reward functions tend to be relatively environment-agnostic and generalizable, often with emphasis on mathematically capturing some analogue for human motivation, such as a desire for ``exploration'' or ``empowerment.''

IM has proven increasingly useful for complex or sparse-rewards environments in recent years. However, the actual reward shaping terms used in the IM literature lie almost universally outside of the traditional MDP and POMDP frameworks, as they cannot be written as a function of a single state transition, $R(s,a,s')$.

A large portion of IM literature is focused on incentivizing exploration, particularly in sparse-reward environments. Simple versions exist, such as incentivizing taking actions that have not been taken recently \cite{sutton1990integrated} or keeping a tabular list of how often each state has been explored, and rewarding less-visited ones \cite{strehl2008analysis}. Recently, more complex exploration rewards have been developed. Tabular methods have been extended to larger, more complex environments through ``pseudo-counts'' \cite{bellemare2016unifying,ostrovski2017count}, which `count' the number of similar states visited in a learned representation of the (potentially continuous) state space. Curiosity-based methods like Intrinsic Curiosity Module (ICM) \cite{pathak2017curiosity} reward agents for ``surprising'' (maximizing the error rate of) an auxiliary network that is itself trained to predict the environment state dynamics. Random Network Distillation (RND) \cite{burda2018exploration}, similarly, rewards agents for fooling a predictor in a random feature space, rather than one specifically tailored to the environment dynamics, and finds that this is surprisingly very competitive with ICM, despite requiring much fewer computations.
Another common IM method relies on ``empowerment'' \cite{mohamed2015variational}, which is a mutual information metric between the agent's actions and future states. \citet{raileanu2020ride} aims to maximize the impact of an agent's actions on a learned state representation. Empowerment-based IM terms rely on the intuition that in general, across all possible environment and reward function permutations, policies that ``keep their options open,'' so to speak, will obtain higher average returns than those which lead into areas of the state space from which the agent can't return. A theoretical analysis and proof of this concept can be found in \cite{turner2019optimal}. Note that, while the methods we present in this paper are able to accommodate most IM terms, they do require an assumption (Assumption \ref{not_empowerment}) that the intrinsic rewards in question be ``future-agnostic,'' and thus cannot be guaranteed to preserve the optimal policy when used with most empowerment-based IM terms.

All examples thus far have used IM as a method for supplementing (usually sparse) base extrinsic rewards. Recently however, IM without the base reward, either to learn skills to be applied later \cite{eysenbach2018diversity} or to replace external rewards entirely \cite{burda2018large}, has gained attention.
While our work with IM is concerned primarily with situations in which there is a precisely known, accessible extrinsic reward function, the surprising efficacy of these externally reward-less methods in some environments deserves closer theoretical examination.

IM is not the only method for dealing with sparse-reward environments. Other work has leveraged expert demonstrations \cite{vecerik2017leveraging}, including leveraging classic PBRS to convert expert demonstrations to a reward function which theoretically guarantees convergence to an optimal policy even if the demonstration itself is suboptimal \cite{brys2015reinforcement}. Another successful mitigation tool in prior work is hindsight experience replay \cite{NIPS2017_453fadbd}, which trains an agent in ``hindsight'' by taking into account counterfactually possible goal states.

There has been some prior work on the risks of IM. Examples include the ``noisy TV'' problem, where an agent with an exploration term advising it to seek novelty can get distracted from a base task by some particularly stochastic object in its environment \cite{burda2018large}. While \cite{burda2018large} sets up experiments specifically to demonstrate the noisy TV problem by literally placing a noisy TV into a simulated maze, it is a problem that is naturally seen in applications of RL (see, for example \citep{pokemon}, wherein the author encounters a version of this problem while implementing an IM term to train an agent to play Pokemon, a game whose bush sprites are naturally ``noisy'').There is also a tendency of other exploration terms less susceptible to the noisy TV problem, such as RND, still causing agents to become noticeably ``risk seeking'' once they've exhausted all easy-to-obtain intrinsic rewards \citep{burda2018exploration}. There is a large body of theoretical work in this area \cite{amodei2016concrete}, but empirical study remains sparse. We hope that our work here can assist empirical research in this area.

There has been some other work in the area of mitigating adverse effects of IM terms, coming mostly in the form of hyperparameter tuning. \citet{chen2022redeeming} utilize a clever method of automatically tuning up exploration coefficients in exploration-heavy environments and tuning them down where IM is less beneficial. Our methods differ from this in two key ways. Firstly, and most importantly, they deliver vital theoretical guarantees that the set of optimal policies will remain unchanged, and thus that any convergence guarantees apply within the new environment. Secondly, while \cite{chen2022redeeming} requires additional hyperparameters, network architecture, and optimization steps beyond that for the combined loss function, our methods require virtually no additional computational overhead, and address the problem solely by adjusting the reward shaping term to one that guarantees an unchanged set of optimal policies.

\section{Extending Potential-Based Reward Shaping to Functions of Arbitrary Variables} \label{extending_sec}

In an episodic environment, we normally want to choose a policy $\pi$ so as to optimize the value function
\begin{equation} \label{V}
    V_{M}^\pi = \displaystyle \mathop{\mathbb{E}}_{a \sim \pi, s \sim T, R_n \sim R} U_M^\pi.
\end{equation}
Here $U_M^\pi$ is the cumulative discounted return:
\begin{equation} \label{U}
     U_M^\pi = \sum_{n = 0}^{N-1} \gamma^n R_n,
\end{equation}
where the rewards $R_n$ are sampled from acting under policy $\pi$ according to the transition dynamics and reward function of environment $M$. Note that we are here considering the general case where the reward function itself need not be deterministic. We also want to define the discounted future return at some arbitrary time step $t$: 
\begin{equation} \label{U_t}
     U_{M,t}^\pi = \sum_{n = t}^{N-1} \gamma^{n-t} R_n,
\end{equation}
the expectation of which is $V_{M,t}^\pi$. Given this, an optimal policy under $R$ for environment $M$ at time $t$ will satisfy
\begin{equation}
    \pi_{M}^* = \argmax_\pi( V_{M,t}^\pi).
\end{equation}
This optimal policy $\pi_M^*$ will also satisfy
\begin{equation}\label{Q_argmax}
     \pi_{M}^*(s) = \argmax_{a_t}( Q_{M,t}^*)
\end{equation}
where
\begin{equation}\label{Q}
    Q_{M,t}^\pi = R_t + V_{M,t+1}^\pi,
\end{equation}
and $Q_{M}^*$ is taken to be $Q_{M}^\pi$ of the optimal policy $\pi = \pi^*_M$. 
If we now define a new environment $M'$ equivalent to $M$ but with the addition of a shaping reward
\begin{equation} \label{shaping_reward}
    F_t = \gamma \Phi_{t+1} - \Phi_t,
\end{equation}
then we can calculate the return for a trajectory in $M'$
\begin{align} \label{U_M_prime}
    {U_{M',t}^\pi} &{= \sum_{n=t}^{N-1} \gamma^{n-t} (R_n + F_n)} \\
    \label{U_M_prime2}
         &{= \sum_{n = t}^{N-1} \gamma^{n-t} (R_n + \gamma \Phi_{n+1} - \Phi_n).}
\end{align}
In order to prove that adding a shaping reward of the form in Equation \ref{shaping_reward} does not alter the set of optimal policies of the underlying environment, it is sufficient to prove that, at every state and timestep, choosing $a$ to optimize $Q_{M',t}^*$ will necessarily optimize $Q_{M,t}^*$ as well, and vice versa. To investigate the conditions under which this relation will hold, we can reduce Equation~\ref{U_M_prime2} to
\begin{align} \label{new_potential}
{U_{M',t}^\pi =} &{\sum_{n=t}^{N-1} \gamma^{n-t}R_n + \sum_{n=t}^{N-1} \gamma^{n-t}(\gamma\Phi_{n+1} - \Phi_n) }\\
  {=}& { U_{M,t}^\pi + \gamma \Phi_{t+1} - \Phi_t + \gamma^2 \Phi_{t+2} - \gamma \Phi_{t+1}  + }\\
  &{\gamma^3\Phi_{t+3} - \gamma^2 \Phi_{t+2} + \cdots + \gamma^{N-t}\Phi_N - \gamma^{N-(t+1)}\Phi_{N-1} }\\
  {=}&{ U_{M,t}^\pi + \gamma^{N-t}\Phi_N - \Phi_t.}
\end{align}
This is essentially the derivation for Equation~\ref{Grzes_contribution} by \cite{grzes2017reward} with a potentially non-Markovian $\Phi_t$, and generalized to apply to all time steps, rather than just $t=0$. Through an application of Equation \ref{V} and Equation \ref{Q}, this becomes
\begin{equation}
    {
    Q_{M',t}^\pi = Q_{M,t}^\pi + \mathop{\mathbb{E}}_{a \sim \pi, s \sim T, R_n \sim R} \left( \gamma^{N-t}\Phi_N - \Phi_t \right).
    }
\end{equation}
Here we see that $Q_{M',t}^\pi$ differs in expectation from $Q_{M,t}^\pi$ by two terms. If these terms' sum is constant with respect to $a_t$, then Equation \ref{Q_argmax} can be applied to show the equivalence of optimal policies between these two environments. This gives us the condition
\begin{equation}\label{boundary_condition}
     \mathop{\mathbb{E}}_{a \sim \pi, s \sim T, R_n \sim R} \left( \gamma^{N-t}\Phi_N - \Phi_t \right) = \Phi'_t \qquad \forall t \in (0,1,...N-1),
\end{equation}
where $\Phi'_t$ is some arbitrary function that is constant with respect to action $a_{t}$. From here, we can state Theorem \ref{extending_thm}:
\begin{thm}[Sufficient Condition For Optimality]\label{extending_thm}
    The addition of a shaping reward $F_t = \gamma \Phi_{t+1} - \Phi_t$ leaves the set of optimal policies unchanged if Equation \ref{boundary_condition} holds.
\end{thm}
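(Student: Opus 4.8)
The plan is to build directly on the relationship the excerpt has already derived, namely $Q_{M',t}^\pi = Q_{M,t}^\pi + \mathop{\mathbb{E}}_{a \sim \pi, s \sim T, R_n \sim R}\left(\gamma^{N-t}\Phi_N - \Phi_t\right)$, which holds for every policy $\pi$, state, and time step. First I would invoke the hypothesis, Equation~\ref{boundary_condition}, to replace the expectation by $\Phi'_t$, yielding $Q_{M',t}^\pi = Q_{M,t}^\pi + \Phi'_t$ with $\Phi'_t$ constant in $a_t$. Because adding a term that does not depend on $a_t$ cannot change which actions attain the maximum, this immediately gives $\argmax_{a_t} Q_{M',t}^\pi = \argmax_{a_t} Q_{M,t}^\pi$ at every state and time step, for each fixed $\pi$.

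The remaining work is to upgrade this per-policy, per-step $\argmax$ coincidence into a statement about the two \emph{sets} of optimal policies. Here I would use Equation~\ref{Q_argmax}, which characterizes an optimal policy as one that is greedy with respect to its own action-value function. Concretely, let $\pi$ be optimal in $M$; then $\pi(s) \in \argmax_{a_t} Q_{M,t}^\pi$ at every $(s,t)$, and by the identity above the same $\pi$ satisfies $\pi(s) \in \argmax_{a_t} Q_{M',t}^\pi$, i.e.\ $\pi$ is greedy with respect to its own action-value function in $M'$. By the Bellman optimality characterization (equivalently, the policy-improvement theorem: a policy that is a fixed point of greedy improvement is optimal), $\pi$ is therefore optimal in $M'$. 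Running the identical argument with the roles of $M$ and $M'$ exchanged gives the reverse inclusion, so the two optimal-policy sets coincide.

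The step I expect to be the main obstacle is precisely this closure from the fixed-$\pi$ relationship to equality of the optimal sets. The derived identity is quantified over a fixed $\pi$, whereas optimality is a global property; the danger---exactly the gap exploited by \citet{behboudian2022policy} against the formulation of \citet{harutyunyan2015expressing}---is to conclude global optimality from agreement at a single time step only. To avoid this I would make the greedy-implies-optimal step explicit via backward induction on $t$ from the terminal step $t = N-1$: at $t = N-1$ there is no future return, so $Q_{M,N-1}^\pi = R_{N-1}$ and the two $\argmax$ sets agree unconditionally, and the inductive step fixes an action $a_t$, appeals to the shared optimal continuation guaranteed by the induction hypothesis, and uses the $a_t$-independence of $\Phi'_t$. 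A secondary point requiring care is that $M'$ carries a non-Markovian reward through $\Phi$, so I would phrase the induction over full histories (or restrict attention to the Markovian optimal policies of the underlying MDP $M$) rather than over states alone, and I would note that the future-action dependence of $\Phi_N$ is not itself an obstacle here, since Equation~\ref{boundary_condition} absorbs it into the hypothesis by assumption.
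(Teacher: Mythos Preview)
Your core argument matches the paper's: substitute $\Phi'_t$ via Equation~\ref{boundary_condition} into the derived $Q$-relationship and use $a_t$-independence to conclude the $\argmax$ is unchanged. The paper's actual proof, however, is much terser---it simply writes the four-line chain $\pi^*_{M'}(s) = \argmax_{a_t} Q^*_{M',t} = \argmax_{a_t}(Q^*_{M,t} + \Phi'_t) = \argmax_{a_t} Q^*_{M,t} = \pi^*_M(s)$, applying the identity directly at the level of optimal $Q$-functions and invoking Equation~\ref{Q_argmax} without further justification. All of the machinery you propose to close the per-policy-to-global gap---the greedy-fixed-point characterization, the backward induction on $t$, and the care about non-Markovian histories---is absent from the paper. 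Your version is more rigorous on exactly the point you flag as the main obstacle; the paper treats that step as immediate, in the spirit of \citet{ng1999policy}. What your approach buys is an explicit argument that the per-$\pi$ shift by $\Phi'_t$ really does transfer to the optimal sets (and some protection against the \citet{behboudian2022policy}-style pitfall you mention); what the paper's approach buys is brevity, at the cost of leaving that transfer implicit.
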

\begin{proof}
Given Equation \ref{boundary_condition}, then $\forall t \in (0,1,...,N-1)$,
\begin{align} \label{policy_set_equivalence}
    {\pi_{M'}^*(s)} &{= \argmax_{a_t}( Q_{M',t}^*) }\\
    &{= \argmax_{a_t} (Q_{M,t}^* + \displaystyle \mathop{\mathbb{E}} \left( \gamma^{N-t}\Phi_N - \Phi_t \right)) }\\
    &{= \argmax_{a_t} (Q_{M,t}^* + \Phi'_t)}\label{prephi} \\
                    &{= \argmax_{a_t} (Q_{M,t}^*) = \pi_M^*(s).}\label{postphi}
\end{align}

Note the step between Equations \ref{prephi} and \ref{postphi}: here we are relying on the $a_t$-independence of $\Phi'_t$ to ensure it doesn't affect the $\argmax_{a_t}$ term.\footnote{This step is similar to a step in the central proof of \cite{ng1999policy}.} This is equivalent to stating the set of optimal policies is unchanged by the shaping reward. 
\end{proof}
It is worthwhile to briefly examine what prior work has done to preserve the condition in Equation \ref{boundary_condition}, in order to emphasize that this is the most general treatment of this problem to date, and to situate it within prior literature. In a non-episodic setting, the $\gamma^{N-t}\Phi_N$ term either drops out (in the infinite-horizon setting) or is definitionally independent of $a_t$ (in the setting with a set absorbing state). Thus, much prior work in this area has focused on solely the $-\Phi_t$ term. This has been dealt with by either restricting the potential to be independent of $a_t$ \cite{ng1999policy, devlin2012dynamic}, restricting it to be independent of $a_t$ in the limit as training continues \cite{behboudian2022policy}, or subtracting this potential where appropriate to accommodate its $a_t$-dependence \cite{wiewiora2003principled}. 

All of these methods' restrictions to $\Phi$ can be viewed as subsets of the general class of shaping functions that satisfy Equation \ref{boundary_condition}. Similarly, prior work in episodic PBRS has restricted itself to $-\Phi_t$ terms that are independent of $a_t$, and thus has dealt with the $\gamma^{N-t}\Phi_N$ term by setting $\Phi_N = 0$ \cite{grzes2017reward}. Again, while this is a valid subset of the larger solution space for Equation \ref{boundary_condition}, it excludes an important set of solutions in which each of these terms, while individually $a_t$-dependent, have this dependence cancel out when they are summed together. As we will see, these solutions have incredible potential applications for novel shaping functions, particularly as a method to incorporate IM methods without changing the optimal policy of the underlying environment.
\section{Converting Intrinsic Motivation
to Potential-Based Reward Functions}\label{converting}

All of the IM examples cited in Section \ref{sec:related_work} can change the set of optimal policies, with possibly adverse effects. Thus mitigating these effects, and using IM while guaranteeing the set of optimal policies isn't altered, is highly desirable. We present a practical and straightforward way to convert most IM rewards to a form that is guaranteed not to alter the set of optimal policies. More formally, we present a method that guarantees not to alter the set of optimal policies for an IM whose terms do not depend on the future actions of the agent. We call this approach Potential Based Intrinsic Motivation~(PBIM).

The trick is to realize that, in all time steps but the last, any arbitrary reward function (including IM) \emph{is already a difference of a potential function in the proper form} due to the recursive relation between rewards and their respective cumulative returns. If we define $F_t$ to be an arbitrary intrinsic reward at time step $t$, and $U_t^\pi$ to be the cumulative discounted intrinsic reward sampled from following policy $\pi$ at time step $t$\footnote{Here, we drop the $M$ subscript for simplicity. Note also that in this section, $U_t^\pi$ exclusively denotes the \emph{intrinsic} discounted return, rather than the sum of intrinsic and extrinsic returns.}, then we can rewrite the standard recursive relation between them as
\begin{equation} \label{rewritten_Bellman}
F_t = U_t^\pi - \gamma U_{t+1}^\pi.
\end{equation}

This is conveniently similar to the necessary potential formulation in Equation~\ref{shaping_reward}. In fact, if we choose $\Phi_t = -U_t^\pi$, these equations become identical. 

Choosing this form for $\Phi_t$ may initially appear untenable in a wide variety of environments, as it seems to imply that we will need to know $U_t$ before beginning training. This would presuppose a level of knowledge about the environment and future trajectory of the agent that is unrealistic. However, this is not the case: we don't have to actually \textit{know} $U_t$ in order to set it equal to $\Phi_t$. We only have to know $F_t$, as it is already a difference of the requisite potential in Equation \ref{rewritten_Bellman}, even if we don't know that potential itself.

If we thus choose $\Phi_t = -U_t^\pi$ and implement an IM term normally, we can investigate under what conditions the optimal policy set is preserved by examining Equation \ref{boundary_condition}. It becomes
\begin{equation}\label{potential_boundary}
     \mathop{\mathbb{E}}_{a \sim \pi, s \sim T, R_n \sim R} \left( U_t^\pi - \gamma^{N-t}U_N^\pi\right) = \Phi'_t \qquad \forall t \in (0,1,...N-1).
\end{equation}
This condition is not satisfied by default for most IM terms, as $U_t^\pi$ will be action-dependent in most interesting environments. It is also unnecessarily complicated in this formulation, as $U_N=0$ for any environment in an episodic setting. These observations motivate the choice of potential
\begin{equation} \label{initial_potential}
    \Phi_t = \begin{cases} -U_0^\pi/\gamma^N, &  \text{if } t = N \\  -U_t^\pi, &  \text{if }  t \neq N, \end{cases}
\end{equation}
which is similar to choosing $\Phi_t = -U_t^\pi$, but with the crucial exception that $\Phi_N =-\frac{U_0^\pi}{\gamma^N}$, by virtue of $N$ being the last time step in the episode. Our choice of potential here for the $t=N$ case is motivated by setting $\Phi'_t$ in Equation \ref{potential_boundary} to 0 for the $t=0$ case, and solving for $\Phi_N$.

Thus, if we have a shaping reward $F_t$, and we want to utilize some optimality-preserving permutation of it $F'_t$ utilizing the potential of Equation \ref{initial_potential}, we can use
\begin{equation} \label{naive_conversion}
    F'_t = \begin{cases}  \sum_{n = 0}^{N-2} -\gamma^{n+1-N}F_n, &  \text{if } t = N - 1 \\  F_t, &  \text{if } t \neq N - 1,\end{cases}
\end{equation}
which is simply Equation \ref{shaping_reward} with $\Phi_t$ defined as in Equation \ref{initial_potential}. 

Equation~\ref{naive_conversion} has an appealing interpretation. It is equivalent to implementing the shaping reward ``normally'' until the very last time step, at which point the total discounted rewards are subtracted in order to ensure Equation \ref{boundary_condition} still holds. Described this way, it is both simple to understand and straightforward to implement.

Equation~\ref{naive_conversion} also has the advantage that it makes it particularly difficult for most agents to ``figure out'' that optimizing intrinsic motivation does nothing to increase their value function in the long run, because the adjustment term is at the very end of a given episode. This extends the reward horizon, to use the terminology of \citet{laud2004theory}, or the time delay between an action and the (intrinsic) returns dependent on that action. This makes it intentionally difficult for the agent to discover that IM doesn't ever actually affect the final return of an episode (because an appropriately discounted quantity will always be deducted later). Much work has gone into the goal of shortening the reward horizon on various problems, oftentimes through reward shaping terms (see, for example, Theorem 3 of \citet{Ngthesis}), but this work shows that actually \emph{increasing} the reward horizon for the futility of pursuing IM can be useful---it allows these rewards to still give hints to the agent, without being immediately discovered as ``worthless'' in the long run. The agent will then seek these rewards in the short term, but discard them in the long term insofar as optimizing for them would deviate from an optimal policy.

For the formal proof that Equation \ref{naive_conversion} leaves an optimal policy unaltered, we must make a single assumption about $F_t$ that limits the scope of rewards our method applies to:
\begin{assumption} \label{not_empowerment}
    $F_t$ is constant w.r.t.\ $a_{t'>t} \forall t, t' \in (0,1,...N-1).$
\end{assumption}
We refer to this assumption as ensuring that $F_t$ is ``future-agnostic:'' that is, it depends only on past actions of the agent in this episode, rather than future ones. This assumption is quite general, and holds for the majority of IM in the literature, including state-of-the-art exploration methods such as ICM and RND. Note that this assumption generally holds for action-dependent IM, so long as that action-dependence does not extend to \emph{future} actions, but is restricted to actions taken by the agent at the current time step and/or prior ones. The key example in the literature for which this assumption does not hold is empowerment \cite{mohamed2015variational}, in which states are given intrinsic weight that is based in part on future actions. Our method, however, can accomodate the vast majority of IM in the literature.

We can now prove Theorem \ref{theorem_2}:
\begin{thm}[PBIM Preserves Optimality]\label{theorem_2}
The addition of a shaping reward $F'_t$ of the form in Equation \ref{naive_conversion} leaves the set of optimal policies unchanged if Assumption \ref{not_empowerment} holds.
\end{thm}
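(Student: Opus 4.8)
The plan is to reduce the statement to Theorem \ref{extending_thm}, which already supplies a sufficient condition for optimality preservation in terms of the boundary condition of Equation \ref{boundary_condition}. The whole argument therefore consists of two steps: (i) recognizing $F'_t$ as a genuine potential difference $\gamma\Phi_{t+1}-\Phi_t$ for the potential $\Phi$ of Equation \ref{initial_potential}, and (ii) verifying that this particular $\Phi$ satisfies the boundary condition under Assumption \ref{not_empowerment}. Once both hold, the conclusion is immediate.

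For step (i), I would confirm the two cases of Equation \ref{naive_conversion} separately. When $t \neq N-1$ (and hence $t+1 \neq N$), substituting $\Phi_t = -U_t^\pi$ and $\Phi_{t+1}=-U_{t+1}^\pi$ into $\gamma\Phi_{t+1}-\Phi_t$ collapses, via the recursion of Equation \ref{rewritten_Bellman}, to exactly $F_t$. When $t = N-1$, the special value $\Phi_N = -U_0^\pi/\gamma^N$ is what matters: expanding $U_0^\pi$ and $U_{N-1}^\pi$ as discounted sums of the $F_n$ and cancelling the shared $F_{N-1}$ term leaves precisely the correction $\sum_{n=0}^{N-2}-\gamma^{n+1-N}F_n$. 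This is routine algebra and reproduces Equation \ref{naive_conversion} exactly, confirming that $F'_t$ is a bona fide shaping reward of the form in Equation \ref{shaping_reward}.

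For step (ii), I would evaluate the left-hand side of Equation \ref{boundary_condition} for this $\Phi$. Substituting $\Phi_N = -U_0^\pi/\gamma^N$ and $\Phi_t = -U_t^\pi$ gives $\gamma^{N-t}\Phi_N - \Phi_t = U_t^\pi - \gamma^{-t}U_0^\pi$; writing both returns as discounted sums of rewards and subtracting, every term with index $n \geq t$ cancels, leaving $-\sum_{n=0}^{t-1}\gamma^{n-t}F_n$. The crucial observation is that this residual depends only on rewards $F_n$ strictly before time $t$. By Assumption \ref{not_empowerment} each such $F_n$ (with $n<t$) is future-agnostic, hence constant with respect to $a_t$; the finite sum, and therefore its expectation, is likewise constant with respect to $a_t$. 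Setting $\Phi'_t$ equal to this expectation verifies Equation \ref{boundary_condition}, and Theorem \ref{extending_thm} then yields the claim.

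The main obstacle is the action-independence argument closing step (ii): one must be careful that taking the expectation over trajectories does not silently reintroduce dependence on $a_t$. The point to nail down is that the residual $-\sum_{n=0}^{t-1}\gamma^{n-t}F_n$ is determined by the history up to time $t-1$, so conditioning on the choice of $a_t$ (as the $Q$-function does) leaves it unchanged; Assumption \ref{not_empowerment} is exactly what guarantees that no $F_n$ in the sum secretly depends on $a_t$. Everything else is the telescoping and Bellman bookkeeping already rehearsed in the derivation of Equation \ref{new_potential}.
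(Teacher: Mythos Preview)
Your proposal is correct and mirrors the paper's own proof: both identify $F'_t$ with the potential difference induced by Equation \ref{initial_potential}, reduce the boundary condition of Equation \ref{boundary_condition} to the residual $-\sum_{n=0}^{t-1}\gamma^{n-t}F_n$, and invoke Assumption \ref{not_empowerment} to conclude via Theorem \ref{extending_thm}. The only cosmetic difference is that you spell out step (i) explicitly, whereas the paper treats the identification of the potential as already established by the discussion preceding the theorem.
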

\begin{proof}
    The potential of Equation \ref{naive_conversion} takes the form of Equation \ref{initial_potential}. With this choice of potential, the left side of Equation \ref{boundary_condition} becomes, in expectation,
\begin{align}
    U_t^\pi - \frac{U_0^\pi}{\gamma^t} &{= \sum_{n = t}^{N-1} \gamma^n F_n  - \sum_{n = 0}^{N-1} \gamma^{n-t} F_n }\\
    &{= \left( \sum_{n = t}^{N-1} \gamma^n F_n - \sum_{n = t}^{N-1} \gamma^n F_n\right) - \sum_{n=0}^{t-1} \gamma^{n-t} F_n }\\
    &{= - \sum_{n=0}^{t-1} \gamma^{n-t} F_n.}
\end{align}
This term depends simply on the discounted sum of all the IM rewards up to, but not including, $F_t$. From Assumption \ref{not_empowerment}, this has no $a_t$-dependence, and thus Equation \ref{boundary_condition} holds. From this, Theorem \ref{extending_thm} can be applied to show the optimal policy set is unchanged.
\end{proof} 

While we have just shown it does conserve the optimal policy, this form of PBIM has some potentially undesirable effects in practice. In particular, it may still bias the agent in the short term not only to prefer intrinsic rewards, but also to (temporarily) learn some false relationships between the reward distribution of the state space that then need to be unlearned. Particularly, if $F_t$ is consistently positive (as is the case with most exploration-based IM), then $F'_t$ will also tend to be consistently positive, \emph{except for in the last time step of an episode}, where it will be extremely negative, in order to offset the cumulative positive reward. This may cause an agent to initially learn that areas of the state space towards the end of an episode are ``bad'', and areas towards the beginning of an episode are ``good.'' While Theorem \ref{theorem_2} ensures these associations will eventually be unlearned, we would prefer to not learn them to begin with, as they may needlessly slow down training: particularly in exploration-focused environments, where they are often precisely the opposite of true. To mitigate this potential issue, we introduce a normalized variation of PBIM by replacing $F_t$ with
\begin{equation} \label{conversion}
F'_t = \begin{cases}  \sum_{n = 0}^{N-2} -\gamma^{n+1-N}F'_n, &  \text{if } t = N -1\\  F_t - \bar{F}, &  \text{if } t \neq N-1, \end{cases}
\end{equation}
where $\bar{F}$ is the expectation value of $F$ across prior training. This modified form ensures that the expected IM for both final and non-final time steps is 0, and thus these undesirable associations will not occur. In practice, $\bar{F}$ is calculated by taking a running mean of the previous intrinsic rewards collected across all workers during a single training epoch. 

The corresponding potential for Equation \ref{conversion} is
\begin{equation} \label{phi_set}
    \Phi_n = \begin{cases} \frac{-U'_0}{\gamma^N}, & \text{if } n = N \\ -U_n -\frac{\bar{F}}{\gamma - 1}, & \text{if } n \neq N, \end{cases}
\end{equation}
where $U'_0$ is the cumulative discounted mean-adjusted intrinsic return.
 The first case of this correspondence follows straightforwardly from the definition of the intrinsic return. The second case gives us back:
\begin{align}
    F'_{n \neq N} &= \gamma \Phi_{n+1} - \Phi_n \\
             &= \gamma ( - U_{n+1} - \frac{\bar{F}}{\gamma - 1}) + U_n + \frac{\bar{F}}{\gamma - 1}\\
             &= F_n - \frac{\gamma \bar{F}}{\gamma - 1} + \frac{\bar{F}}{\gamma - 1} \\
             &= F_n - \frac{\gamma-1}{\gamma-1}\bar{F} \\
             &= F_n - \bar{F}.
\end{align}
Because this $F_{n \neq N}$ term has an expected value of zero, $F_N$ will then similarly have an expected value of zero, as it will simply be the discounted sum of quantities with expectation zero. Additionally, because in either case, Equation \ref{conversion} differs from Equation \ref{naive_conversion} only by the addition of a constant factor of $\bar{F}$, and $\bar{F}$ is never dependent on the action $a_t$, this formulation satisfies Equation \ref{boundary_condition} as well. So we now have two formulations -- Equation \ref{naive_conversion} and Equation \ref{conversion} -- that can be used to implement IM without changing the optimal policy set of the underlying environment.

Figure \ref{fig:prior_table} shows PBIM in comparison to prior work in PBRS. Note that it allows for theoretical extensions to PBRS to be claimed by later works that are compatible with those extensions as long as those later works do not incorporate some element that explicitly eliminates those extensions. So for example, the extensions to POMDPs made by \cite{eck2016potential} are credited to \cite{grzes2017reward}, though the latter work doesn't explicitly reference POMDPs. Note also that our work here is unique among PBRS extensions in that it both conserves optimality and comes with a plug-and-play method to avoid hand-designing valid optimality-preserving shaping rewards.

\begin{figure}
    \centering
    \includegraphics[width=0.8\columnwidth]{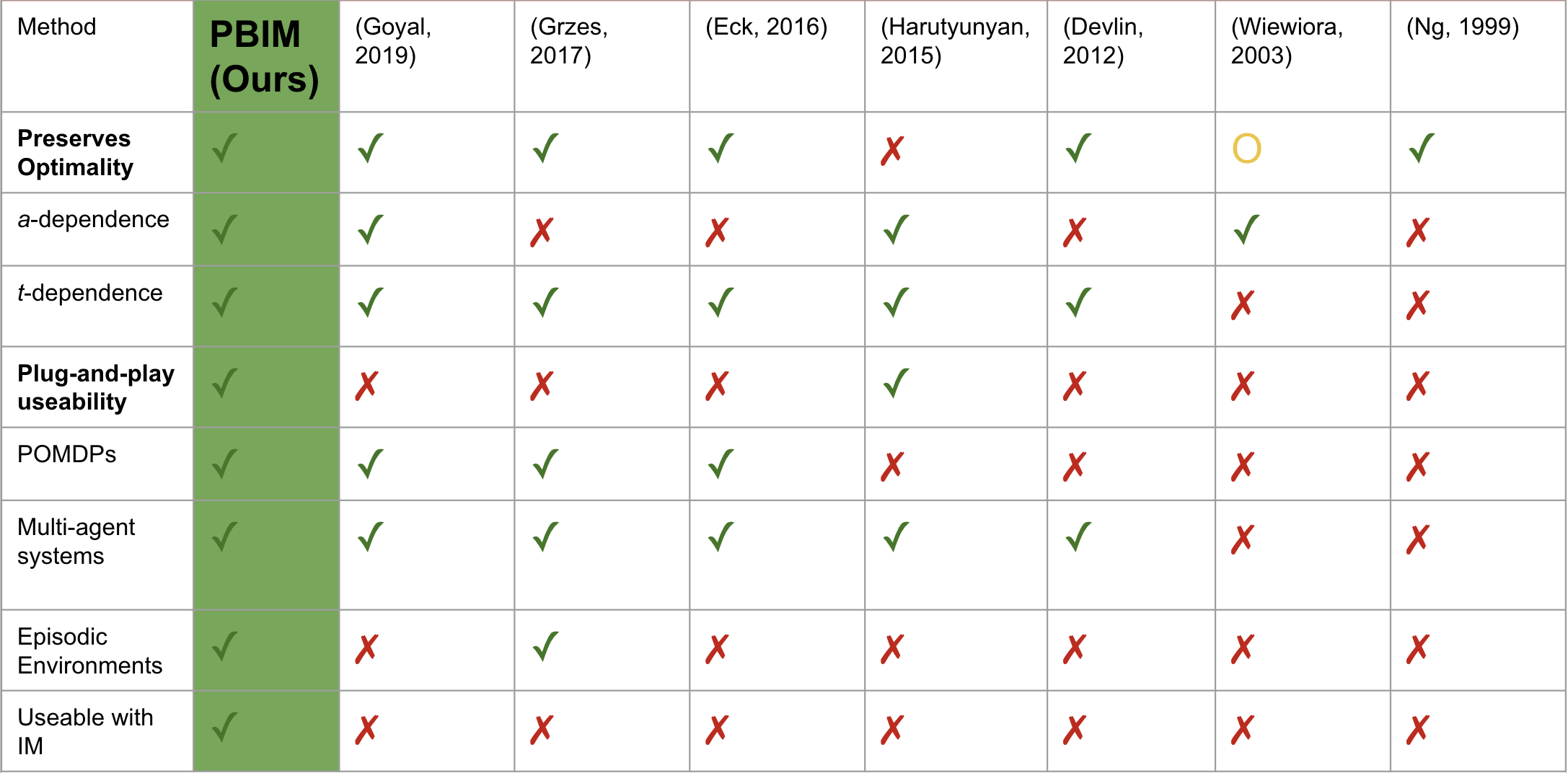}
    \caption{A comparison of our method with prior work in PBRS. A check here means that the domain or characteristic in question is treatable with or true of the method in question, while an 'X' indicates that it is not. The circle in the column for \cite{wiewiora2003potential} indicates that optimality is preserved, but only if the resulting potential is added back to the Q table during policy training.}
    \label{fig:prior_table}
\end{figure}

In the next section, we will generalize PBIM to a class of similar optimality-preserving plug-and-play reward shaping methods, using Generalized Reward Matching.
\section{Generalized Reward Matching: A General Class of PBIM-Like Conversion Methods}

Here we extend the theory of optimality-preserving shaping rewards to a more general form than that covered by either PBIM, or by previous literature. We introduce Generalized Reward Matching (GRM), which we prove is as general as any PBRS method yet published. To state this more precisely: for every optimality-preserving potential one could construct by hand, there is an equivalent GRM reward function. 

\subsection{Introducing the Matching Function}

GRM leverages the notion of a ``matching function,'' 
\begin{align}\label{matching_function}
    m_{t, t'}: N \times N \to [0,1],
\end{align}
which `matches' the shaping rewards received at time~($t'$) to their later-subtracted counterparts at time~($t$). The matching function $m_{t, t'}$ represents, at time step $t$, the proportion of the shaping reward originally received by the agent at some earlier time step $t'$ that is due to be ``matched'' (subtracted with an appropriate discount).

We define the general class of GRM transformation functions as
\begin{equation}\label{drawer}
    F^{\text{'GRM}}_t = F_t - \sum_{i = 0}^{t}\gamma^{i - t}F_{i}m_{t, i}.
\end{equation}

This is to say that, given a valid matching function, Equation \ref{drawer} can be used in a ``plug-and-play'' manner to convert any future-agnostic shaping function or IM $F_t$ into a GRM function $F^{\text{'GRM}}_t$.\footnote{As a reminder, ``future-agnostic'' shaping functions are those for which Assumption \ref{not_empowerment} is true.} We will prove that $F^{\text{'GRM}}_t$ is an optimality-preserving shaping reward given any future-agnostic $F_t$ if $m_{t, t'}$ meets two conditions.

First, $m_{t, t'}$ must be ``fully-matching", which is to say that each shaping reward $F_{t'}$ must be subtracted exactly once total:
\begin{equation}\label{Condition1}
  \forall_{t'} \sum_{j = t'}^{N-1} m_{j, t'} = 1.
\end{equation}
Note that, though all the implementations of $m_{t, t'}$ we explore empirically in this paper match the intrinsic reward for each time step at some other \textit{individual} time step, this condition allows for the general case in which the matching is spread out between multiple time steps. 

We also require that the matching function $m_{t, t'}$ be ``future-agnostic", similarly to our requirement for $F_t$ in Assumption \ref{not_empowerment}. For the matching functions, this means that it should not subtract intrinsic rewards in any time step before the time at which those rewards will actually be initially obtained by the agent: 
\begin{equation}\label{Condition2}
    \forall_{t,t'>t} \quad  m_{t, t'} = 0.
\end{equation}
Note then that the exponent in Equation~\eqref{drawer} will always be either zero or negative.

We will call a reward shaping function of the form in Equation~\eqref{drawer} that meets these conditions a {\em GRM shaping reward}. We will show that any PB-GRM shaping reward will not change the optimal policy set of the underlying environment. In Section \ref{proof_section}, we first motivate an intuition for GRM, situate it relative to PBIM, and then prove this optimality-preserving property.

\subsection{GRM Preserves Optimality} \label{proof_section}
To get an intuitive sense of GRM, and the matching function in particular, it is useful to examine what happens when we choose $m_{t, t'}$ such that
\begin{equation}\label{trivial_m}
     m_{t, t'} = \begin{cases}
        1 & \text{if} \quad t = t' \\
        0 & \text{otherwise}.
    \end{cases}
\end{equation}
One can check that this meets the conditions in Equations~\eqref{Condition1}~\&~\eqref{Condition2}. We can interpret this matching function, loosely, as ``accounting for'' each reward fully in the timestep in which it is recieved by the agent. With this matching function, Equation~\eqref{drawer} becomes $F'_t = 0 \quad \forall_t$, something that trivially conserves the optimal policy of any environment, even if it may be of dubious use.\footnote{It effectively eliminates any intrinsic reward, preventing any reduction in training time from simply using the base reward alone.}

Similarly, we can connect GRM back to PBIM through Equations~\eqref{naive_conversion} and ~\eqref{drawer}. If we use the matching function 
\begin{equation}\label{pbimmatching}
    m_{t, t'} = \begin{cases}
        1 & \text{if} \quad t = N-1 \\
        0 & \text{otherwise},
    \end{cases}
\end{equation}
we are accounting for each intrinsic reward at the very end of the episode, and no sooner. Using this matching function, we can see that Equation~\eqref{drawer} becomes equal to Equation~\eqref{naive_conversion}. PBIM, then, is equivalent to the specific case of GRM with a matching function defined by Equation~\eqref{pbimmatching}. This also preserves optimality, as we proved in Section \ref{converting}.

Of course, Equations~\eqref{trivial_m} and \eqref{pbimmatching} don't exhaust the possible choices for $m_{t, t'}$, but they should provide the reader with some initial intuition both of what $m_{t, t'}$ represents, and of its potential to represent a general class of shaping functions. We will prove that not only does this class of reward shaping functions preserve optimality, but it in fact is equivalent to the class of possible potential-based reward shaping functions. 

We will proceed by showing that, for every $F^{\text{'GRM}}_t$ that is a valid implementation of Equation~\eqref{drawer},\footnote{Here, and for the rest of the paper, we take a ``valid" implementation of Equation~\eqref{drawer} to be one in which Equations~\eqref{Condition1}~\&~\eqref{Condition2} hold for $m_{t, t'}$ and Equation~\eqref{not_empowerment} holds for $F_t$.} there exists some associated $\Phi_t$ such that $F^{\text{'GRM}}_t$ equals Equation~\eqref{shaping_reward}, and this chosen $\Phi_t$ meets the condition in Equation~\eqref{boundary_condition}: a condition which we proved in Theorem \ref{extending_thm} to be sufficient for preserving optimality of the underlying environment.

\begin{theorem} \label{PBGRM_theorem}
For every valid GRM shaping reward, there is a corresponding equivalent potential-based shaping reward that conserves the optimal policy set of the underlying MDP.
\end{theorem}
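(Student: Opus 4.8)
The plan is to follow exactly the route flagged before the theorem: given a valid GRM shaping reward $F^{\text{'GRM}}_t$, exhibit an explicit potential $\Phi_t$ so that $F^{\text{'GRM}}_t$ takes the form of Equation~\eqref{shaping_reward}, and then verify that this $\Phi_t$ satisfies the boundary condition in Equation~\eqref{boundary_condition}; optimality preservation then follows immediately from Theorem~\ref{extending_thm}. The natural candidate, mirroring the PBIM observation that $\Phi_t=-U_t^\pi$ turns $F_t$ into a potential difference, is to set $\Phi_t = -U_t^{\text{'GRM}}$, where $U_t^{\text{'GRM}} = \sum_{n=t}^{N-1}\gamma^{n-t}F_n^{\text{'GRM}}$ is the cumulative discounted GRM return from time $t$. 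By the same recursive relation between a reward and its cumulative return used in Equation~\eqref{rewritten_Bellman}, namely $F^{\text{'GRM}}_t = U_t^{\text{'GRM}} - \gamma U_{t+1}^{\text{'GRM}}$, this choice gives $\gamma\Phi_{t+1}-\Phi_t = U_t^{\text{'GRM}} - \gamma U_{t+1}^{\text{'GRM}} = F^{\text{'GRM}}_t$, so $\Phi_t$ is a legitimate potential for $F^{\text{'GRM}}_t$ with no further work. Moreover $U_N^{\text{'GRM}}=0$ forces $\Phi_N=0$, so the boundary-condition expression $\gamma^{N-t}\Phi_N-\Phi_t$ collapses to exactly $U_t^{\text{'GRM}}$. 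The whole theorem therefore reduces to one claim: that $\E(U_t^{\text{'GRM}})$ is constant with respect to $a_t$ for every $t$.

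First I would compute $U_t^{\text{'GRM}}$ in closed form. Substituting Equation~\eqref{drawer} and collecting discount factors ($\gamma^{n-t}\gamma^{i-n}=\gamma^{i-t}$) gives $U_t^{\text{'GRM}} = U_t - \sum_{n=t}^{N-1}\sum_{i=0}^{n}\gamma^{i-t}F_i m_{n,i}$, where $U_t=\sum_{n=t}^{N-1}\gamma^{n-t}F_n$ is the ordinary intrinsic return. The crux is interchanging the order of summation in the double sum. Since Condition~\eqref{Condition2} makes $m_{n,i}=0$ whenever $i>n$, for a fixed source index $i$ the matched contributions run over $n$ from $\max(t,i)$ to $N-1$, so $\sum_{n=t}^{N-1}\sum_{i=0}^{n}\gamma^{i-t}F_i m_{n,i} = \sum_{i=0}^{N-1}\gamma^{i-t}F_i\sum_{n=\max(t,i)}^{N-1}m_{n,i}$. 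I would then split on the position of $i$ relative to $t$: for $i\ge t$ the inner sum is $\sum_{n=i}^{N-1}m_{n,i}=1$ by the fully-matching Condition~\eqref{Condition1}, contributing exactly $U_t$ and cancelling the leading term; for $i<t$ the inner sum is $\sum_{n=t}^{N-1}m_{n,i}=1-\sum_{n=i}^{t-1}m_{n,i}$, again using Condition~\eqref{Condition1}.

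This leaves $U_t^{\text{'GRM}} = -\sum_{i=0}^{t-1}\gamma^{i-t}F_i\bigl(1-\sum_{n=i}^{t-1}m_{n,i}\bigr)$, an expression built entirely from rewards $F_i$ and matching weights $m_{n,i}$ with $i<t$ and $n<t$, i.e.\ from quantities realized strictly before time step $t$. By Assumption~\ref{not_empowerment} each such $F_i$ is constant with respect to $a_t$ (since $t>i$), and each $m_{n,i}$ appearing here is determined by the trajectory before $a_t$ is taken; hence $U_t^{\text{'GRM}}$ is constant with respect to $a_t$, and so is its expectation. Setting $\Phi'_t=\E(U_t^{\text{'GRM}})$ then verifies Equation~\eqref{boundary_condition}, and Theorem~\ref{extending_thm} delivers the conclusion that the optimal policy set is unchanged.

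I expect the main obstacle to be the index bookkeeping in the summation swap, in particular correctly identifying the range $\max(t,i)\le n\le N-1$ and cleanly applying the fully-matching condition in the two regimes $i\ge t$ and $i<t$, so that the present-and-future matching exactly annihilates $U_t$ and only the unmatched tail of the past survives. The conceptual point underlying the argument is that a fully-matching, future-agnostic $m$ guarantees that, from any time $t$ onward, every reward either originates at or after $t$ (and is fully cancelled within the remaining horizon) or originates before $t$ (and contributes only a fixed, already-determined residual), which is precisely why the remaining GRM return carries no dependence on the current action.
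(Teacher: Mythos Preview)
Your proposal is correct and is essentially the same argument as the paper's: your potential $\Phi_t=-U_t^{\text{'GRM}}$ coincides (up to the paper's additive constant $C$) with the explicit double-sum potential in Equation~\eqref{pot_4_m_f}, and your verification of the boundary condition via the summation swap and the two matching conditions mirrors the derivation leading to Equation~\eqref{deriv4}. The only cosmetic difference is that by writing the potential as $-U_t^{\text{'GRM}}$ you get $\gamma\Phi_{t+1}-\Phi_t=F_t^{\text{'GRM}}$ for free from the recursive return identity, whereas the paper verifies this equality by direct computation from the expanded formula.
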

\begin{proof}

Given a GRM shaping function, we can construct a potential function 
\begin{equation} \label{pot_4_m_f}
\Phi_t = \begin{cases}  C &  \text{if } t = N \\ \sum\limits_{j=t}^{N-1} \sum\limits_{i = 0}^{j} ( \gamma^{i-t}F_{i}m_{j, i}) - U_t^I + \gamma^{N-t}  C &  \text{if } t \neq N, \end{cases}
\end{equation}
where $U_t^I = \sum\limits_{i = t}^{N-1}\gamma^{i-t}F_{i}$ is the cumulative discounted (intrinsic) return of $F_t$ from time $t$ until the end of the episode, and $C$ is an arbitrary constant.\footnote{While a proof by construction omitting $C$ would be sufficient to demonstrate that for every GRM shaping function, there exists at least \textit{one} corresponding optimality-preserving potential function (and thus that GRM preserves optimality), this more general form draws attention to the fact that the potentials in PBRS are always, effectively, only defined up to a constant. As we will see shortly, this constant drops out of all the relevant equations, and thus can be chosen freely without loss of generality.} It's important to note that, similarly to in Section \ref{converting}, we do not actually need to \textit{know} what $U_t^I$ is in order to use it as a component of our potential, as all of the difficult-to-obtain components of it (i.e. yet-to-be-obtained rewards) will drop out in the derivation.

This allows us to construct a potential-based reward $F_t^{\Phi'}$ which is equal to $F^{\text{'GRM}}_t$. Using Equations~\eqref{shaping_reward}~\&~\eqref{pot_4_m_f}, 
we can construct the potential-based reward:
\begin{align}
F_t^{\Phi'} = \gamma  \Phi_{t+1} - \Phi_t 
 & = \gamma \Biggl( \sum_{j=t+1}^{N-1} \sum_{i = 0}^{j} \gamma^{i - t - 1}  F_{i} m_{j,i} \Biggr. \nonumber
 \Biggl. -U_{t+1}^I + \gamma^{N-t - 1}  C \Biggr) \nonumber \\
 & - \Biggl( \sum_{j=t}^{N-1} \sum_{i = 0}^{j} \gamma^{i- t}  F_{i}  m_{j,i}  - U_t^I  + \gamma^{N-t}  C \Biggr)\\
 & = U_t^I - \gamma U_{t+1}^I  - \sum_{i = 0}^{t}\gamma^{i - t}  F_{i}  m_{t, i} 
 = F_t - \sum_{i = 0}^{t}\gamma^{i- t}  F_{i}  m_{t, i} 
  = F^{\text{'GRM}}_t.
\end{align}

So the PBRS reward using this potential is equal to the GRM reward. Given Theorem \ref{extending_thm}, all that's now needed to prove that $F^{\text{'GRM}}_t$ doesn't alter the optimal policy is to check the boundary condition in Equation~\eqref{boundary_condition}. Dropping the expectation without loss of generality, this becomes
\begin{equation}
     \gamma^{N-t}\Phi_N -  \Phi_t = \Phi'_t 
    = U_t^I - \sum_{j=t}^{N-1} \sum_{i = 0}^{j} \gamma^{i-t}F_{i}m_{j,i} 
    = U_t^I - \sum_{j=t}^{N-1} \sum_{i = 0}^{N-1} \gamma^{i-t}F_{i}m_{j,i}\label{deriv2}.
\end{equation}
Note that in the final equality of Equation~\eqref{deriv2}, we used Equation~\eqref{Condition2} to increase the upper bound of the inner sum, as all terms here with  $i > j$ are zero by definition.
We can then swap the order of the sums, and now have
\begin{align}
    U_t^I - \sum_{i = 0}^{N-1} \sum_{j=t}^{N-1} \gamma^{i-t}F_{i}m_{j,i} = 
    U_t^I - \sum_{i = 0}^{N-1} \gamma^{i-t}F_{i} \sum_{j=t}^{N-1} m_{j,i} & = \nonumber \\
    U_t^I - \sum_{i = 0}^{N-1} \gamma^{i-t}F_{i} \Biggl( \sum_{j=i}^{N-1} m_{j,i}  - \sum_{j=i}^{t-1} m_{j,i}\Biggr) & = \nonumber \\
     U_t^I - \sum_{i = 0}^{N-1} \gamma^{i-t}F_{i}\Biggl(1-\sum_{j=i}^{t-1} m_{j,i}\Biggr) & \label{deriv3}. 
\end{align}
To get Equation~\eqref{deriv3}, we applied 
Equation~\eqref{Condition1} to reduce the equivalent inner sum to 1. Expanding out the intrinsic return, this quantity then becomes
\begin{align}
    \sum_{i = t}^{N-1}\gamma^{i-t}F_{i} - \sum_{i = 0}^{N-1} \gamma^{i-t}F_{i}&\Biggl(1-\sum_{j=i}^{t-1} m_{j,i}\Biggr) = \nonumber 
- \sum_{i = 0}^{t-1} \gamma^{i-t}F_{i} +\sum_{i = 0}^{N-1} \gamma^{i-t}F_{i}\Biggl(\sum_{j=i}^{t-1} m_{j,i}\Biggr) \nonumber \\
= & - \sum_{i = 0}^{t-1} \gamma^{i-t}F_{i} +\sum_{i = 0}^{t-1} \gamma^{i-t}F_{i}\Biggl(\sum_{j=i}^{t-1} m_{j,i}\Biggr).\label{deriv4}
\end{align}

Note that to get Equation~\eqref{deriv4}, we once again applied Equation~\eqref{Condition2}, this time to reduce all terms in a sum with $j>t-1$ to zero. 

We have now reduced this expression to a quantity which depends only on the values of $F_{i}$ where $i<t$. Given our condition that 
$F_{t}$ satisfies Equation~\eqref{not_empowerment}, this quantity is necessarily independent of $a_t$, and thus is a valid $\Phi'_t$ for Equation~\eqref{boundary_condition}. Given Theorem \ref{extending_thm}, this entails that any valid $F^{\text{'GRM}}_t$ is also a PBRS function that conserves the optimal policy set of the underlying environment.
\end{proof}
\begin{corollary} Every valid GRM shaping reward leaves the set of optimal policies of the underlying environment unchanged.
\end{corollary}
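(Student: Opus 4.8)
The plan is to recognize that this corollary is an essentially immediate consequence of Theorem~\ref{PBGRM_theorem}, which has already done all of the substantive work. In the proof of that theorem, we explicitly constructed, for an arbitrary valid GRM shaping reward $F^{\text{'GRM}}_t$, a potential $\Phi_t$ (namely Equation~\eqref{pot_4_m_f}) and showed that the associated potential-based shaping reward $F_t^{\Phi'} = \gamma\Phi_{t+1} - \Phi_t$ is \emph{identically} equal to $F^{\text{'GRM}}_t$, for all $t$ and all trajectories. The key observation is that this is a genuine pointwise identity of reward functions, not merely an equality in expectation or an equality that holds only at optimality. Consequently, the modified MDP $M'$ induced by adding $F^{\text{'GRM}}_t$ and the one induced by adding $F_t^{\Phi'}$ are literally the same MDP, and therefore share the same set of optimal policies.

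First I would invoke this identity $F^{\text{'GRM}}_t = F_t^{\Phi'}$ established in Theorem~\ref{PBGRM_theorem}. Next I would recall that $F_t^{\Phi'}$ is, by construction, a shaping reward of the potential-based form in Equation~\eqref{shaping_reward}, and that the final steps of the theorem's proof verified that its potential $\Phi_t$ satisfies the boundary condition in Equation~\eqref{boundary_condition}. Applying Theorem~\ref{extending_thm} directly to $F_t^{\Phi'}$ then yields that adding $F_t^{\Phi'}$ leaves the set of optimal policies of the underlying environment unchanged. Since $F^{\text{'GRM}}_t$ and $F_t^{\Phi'}$ are equal as reward functions, this conclusion transfers verbatim to $F^{\text{'GRM}}_t$, which is exactly the claim of the corollary.

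I do not anticipate any real obstacle, as the corollary functions primarily to make explicit what Theorem~\ref{PBGRM_theorem} already entails. The only point I would take care to state precisely is the upgrade from the theorem's phrasing—that there \emph{exists a corresponding equivalent} PBRS reward conserving optimality—to the unconditional statement that the GRM reward \emph{itself} conserves optimality. That upgrade rests entirely on the identity being exact rather than approximate, so if anything deserves emphasis it is that identification; once it is in hand, the chain from Theorem~\ref{PBGRM_theorem} through Theorem~\ref{extending_thm} closes immediately.
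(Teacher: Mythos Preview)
Your proposal is correct and matches the paper's approach exactly: the corollary is stated without separate proof, as the concluding line of the proof of Theorem~\ref{PBGRM_theorem} already records that any valid $F^{\text{'GRM}}_t$ is a PBRS function satisfying Equation~\eqref{boundary_condition}, whence Theorem~\ref{extending_thm} gives optimality preservation.
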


In Section \ref{sec:equivalence_proof}, we will additionally prove that not only is GRM guaranteed to conserve the optimal policy, but \textit{every} optimality-preserving PBRS term can be written in terms of a GRM reward. That is to say, any hand-designed shaping reward writable in the form of Equation \ref{shaping_reward} that conserves the optimal policy is mathematically equivalent to some GRM matching function and initial reward function.

\subsection{PB-GRM and PBRS Are Equivalent}\label{sec:equivalence_proof}

Here we prove the equivalence of GRM and Equation \ref{shaping_reward}, which we take to be the most general formulation of PBRS. We will rely on Theorem~\ref{PBGRM_theorem} to show that for each GRM shaping reward, there exists a corresponding PBRS shaping reward. We additionally prove that for each PBRS shaping reward, there is a corresponding GRM shaping reward: In other words, there exist no optimality-preserving PBRS functions that GRM cannot accommodate equivalently to their potential-based formulation. Thus, we prove that the two are equivalent.
\begin{theorem}\label{proof1}For every potential-based shaping reward that conserves the optimal policy set of the underlying environment, there is an equivalent corresponding valid GRM shaping reward.
\end{theorem}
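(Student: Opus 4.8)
The plan is to prove the missing inclusion by explicit construction, mirroring the proof of Theorem~\ref{PBGRM_theorem} but running it backwards: starting from an arbitrary optimality-preserving PBRS reward $F_t^{\Phi} = \gamma\Phi_{t+1} - \Phi_t$, I would exhibit a base reward and a valid matching function whose GRM transform (Equation~\ref{drawer}) reproduces $F_t^{\Phi}$. First I would unpack what ``conserves the optimal policy set'' buys us: by Theorem~\ref{extending_thm} the potential satisfies the boundary condition (Equation~\ref{boundary_condition}), so $\Phi_t' := \gamma^{N-t}\Phi_N - \Phi_t$ is constant with respect to $a_t$ for every $t$. Substituting $\Phi_t = \gamma^{N-t}\Phi_N - \Phi_t'$ back into the reward makes the $\Phi_N$ terms cancel and rewrites it as $F_t^{\Phi} = \Phi_t' - \gamma\Phi_{t+1}'$ (with the natural convention $\Phi_N' = 0$), i.e.\ in the truncated form of \citet{grzes2017reward} but with a potential that is only action-independent in its own time index. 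This confirms the reward is ``already'' a telescoping of admissible potentials and motivates why a simple matching will suffice.

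The construction itself is then short. I would take the base reward to be $F_t^{\Phi}$ itself and the matching function to be the PBIM matching of Equation~\ref{pbimmatching}, which accounts for every reward at the final step $t = N-1$ and trivially satisfies the fully-matching and future-agnostic conditions (Equations~\ref{Condition1}~and~\ref{Condition2}). A direct telescoping calculation then shows that the GRM transform equals $F_t^{\Phi}$ for every $t < N-1$ and equals $F_{N-1}^{\Phi}$ minus the single quantity $\gamma^{-(N-1)}\Phi_0'$ at the final step. Since $\Phi_0' = \gamma^N\Phi_N - \Phi_0$ is precisely the (action-independent) boundary term, the resulting GRM reward coincides with $F_t^{\Phi}$ up to a term that is constant with respect to the action and hence invisible to every $\argmax_{a_t}$; the two therefore induce identical optimal policy sets, which is the sense in which they are equivalent.

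The main obstacle, and the step I would be most careful to state precisely, is that this final-step discrepancy cannot be removed. Because the fully-matching condition (Equation~\ref{Condition1}) forces $\sum_{t=0}^{N-1}\gamma^t F_t^{\text{'GRM}} = 0$ identically, every GRM reward has zero total discounted return, whereas a general optimality-preserving PBRS reward has total return $\Phi_0'$, which the boundary condition guarantees is action-independent but not necessarily zero. Exact pointwise equality is therefore impossible, and the honest content of the theorem is equivalence up to an action-independent (hence optimality-irrelevant) offset, exactly the equivalence relation under which the whole PBRS framework operates. A secondary point to nail down is that the base reward $F_t^{\Phi} = \gamma\Phi_{t+1} - \Phi_t$ must itself be future-agnostic to be admissible in Equation~\ref{drawer}; since every GRM reward is automatically future-agnostic, the theorem is necessarily confined to future-agnostic PBRS rewards, and I would make this hypothesis explicit, noting that it holds whenever $\Phi$ depends only on the past, as in the IM setting of Assumption~\ref{not_empowerment}.
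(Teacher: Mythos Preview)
Your approach differs substantially from the paper's. Rather than setting $F_t = F_t^\Phi$ and fixing the PBIM matching, the paper leaves the matching function free (subject to the side condition $m_{t,t}\neq 1$) and then \emph{recursively solves} Equation~\eqref{drawer} for the base reward:
\[
F_t \;=\; \frac{\gamma\Phi_{t+1} - \Phi_t + \sum_{i=0}^{t-1}\gamma^{i-t}F_i\, m_{t,i}}{1 - m_{t,t}}.
\]
With $F_t$ defined this way one checks algebraically that $F_t^{'\text{GRM}} = F_t^\Phi$ \emph{pointwise} at every $t$, so the paper never invokes the boundary condition, never introduces $\Phi_t'$, and never argues about an offset. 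The paper's route is thus short and aims for exact equality rather than equivalence-up-to-offset.

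Your zero-sum observation---that any valid GRM reward satisfies $\sum_t \gamma^t F_t^{'\text{GRM}}=0$ while a general PBRS reward telescopes to $\gamma^N\Phi_N-\Phi_0$---is correct, and it does flag a genuine subtlety at $t=N-1$ in the paper's construction (fully-matching forces $m_{N-1,N-1}=1$, so the displayed denominator vanishes there). But your proposed fallback has its own gap. You assert that the residual $\gamma^{-(N-1)}(\gamma^N\Phi_N-\Phi_0)$ is ``constant with respect to the action and hence invisible to every $\argmax_{a_t}$.'' The boundary condition~\eqref{boundary_condition} guarantees only that this quantity is, in expectation, independent of $a_0$; it says nothing about independence from $a_{N-1}$, and under your own future-agnostic hypothesis $\Phi_N$ may depend on the entire trajectory, including $a_{N-1}$. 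So the final-step discrepancy is not action-independent at the step where it actually appears, and your ``equivalence up to an action-independent offset'' does not go through as stated. What survives is only the trivial observation that both rewards separately preserve the original optimal policy set---which was already immediate from Theorem~\ref{PBGRM_theorem} and the hypothesis, and does not single out any particular GRM reward as corresponding to $F_t^\Phi$.
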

\begin{proof}
Let's say that we have a potential-based shaping reward $F_t^\Phi$ in the form of Equation~\eqref{shaping_reward} following the condition in Equation~\eqref{boundary_condition}: in other words, one that preserves optimality (from Theorem \ref{extending_thm}).  We may then freely choose any matching function $m_{t,t'}$ under the condition that
\begin{equation} \label{proof_m_condition}
    m_{t, t} \neq 1 \quad  \forall t.
\end{equation}
As an important clarification, this condition is not necessary for matching functions in general. However, it greatly simplifies our proof, which requires only that we show \textit{some} combination of $F_t$ and $m_{t,t'}$ that form a $F^{'GRM}_t$ equivalent to $F_t^\Phi$, not exhaust all possibilities. Our chosen $m_{t,t'}$, of course, also has to follow the conditions in Equations~\eqref{Condition1} and \eqref{Condition2}, but these do not practically constrain our ability to choose a valid matching function.

Once $m_{t,t'}$ is chosen, then given some $\Phi$, we can construct
\begin{equation}\label{f_from_m_phi}
    F_t = \frac{\gamma \Phi_{t+1} - \Phi_t + \sum_{i=0}^{t-1} \gamma^{i} F_{i-t} m_{t,i}}{1-m_{t, t}}
\end{equation}
(the denominator in this expression is the reason for our stipulation in Equation~\eqref{proof_m_condition}). We can then verify that with $F_t$ defined as such, the $F^{\text{'GRM}}_t$ of Equation~\eqref{drawer} is equal to $F_t^\Phi$:
\begin{align}
    F^{\text{'GRM}}_t &= F_t - \sum_{i = 0}^{t}\gamma^{i-t}F_{t'}m_{t,i} \\
    &= F_t \left( 1 - m_{t,t} \right) - \sum_{i = 0}^{t-1}\gamma^{i-t}F_{t'}m_{t,i} \\
    &= \gamma \Phi_{t+1} - \Phi_t + \sum_{i=0}^{t-1} \gamma^{i-t} F_{i} m_{t,i} - \sum_{i = 0}^{t-1}\gamma^{i-t}F_{i}m_{t,i} \\
    &= \gamma \Phi_{t+1} - \Phi_t = F_t^\Phi.
\end{align}
Thus, any potential-based optimality-conserving function is also expressible within the GRM framework. 
\end{proof}
Together with the proof of the inverse in Theorem~\ref{PBGRM_theorem}, this suffices to show 
\begin{corollary}
    PBRS and GRM describe an equivalent set of shaping functions.
\end{corollary}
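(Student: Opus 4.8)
The plan is to prove the set equality by establishing mutual containment, since two sets coincide exactly when each is contained in the other. The two sets at issue are the family of optimality-preserving PBRS shaping functions — those reward sequences $\{F_t\}$ expressible as $\gamma\Phi_{t+1} - \Phi_t$ for some potential satisfying Equation \ref{boundary_condition} — and the family of valid GRM shaping functions produced via Equation \ref{drawer}. First I would pin down the operative notion of equivalence: two shaping functions are identified precisely when they emit the same reward at every time step along every trajectory, so the corollary reduces to showing that the collection of attainable reward sequences is identical under the two formulations.

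For the containment of GRM within PBRS, I would simply invoke Theorem \ref{PBGRM_theorem}, which constructs, for any valid GRM shaping reward, an explicit potential $\Phi_t$ (Equation \ref{pot_4_m_f}) whose induced PBRS reward $\gamma\Phi_{t+1} - \Phi_t$ equals the GRM reward at every time step and satisfies the boundary condition; hence every GRM function already lies in the PBRS family. For the reverse containment, I would invoke Theorem \ref{proof1}, which takes any optimality-preserving potential-based $F_t^\Phi$, selects a matching function $m_{t,t'}$ obeying Conditions \ref{Condition1} and \ref{Condition2} together with the technical stipulation $m_{t,t} \neq 1$ of Equation \ref{proof_m_condition}, and back-solves for a base reward $F_t$ (Equation \ref{f_from_m_phi}) so that the resulting $F^{\text{'GRM}}_t$ coincides with $F_t^\Phi$; hence every optimality-preserving PBRS function lies in the GRM family. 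Combining these two inclusions yields the claimed equality and completes the corollary.

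The argument is essentially bookkeeping, since all of the substantive content — the construction of the potential and the verification of the boundary condition in one direction, and the inversion of the GRM transform in the other — has already been discharged in the two theorems. The only point requiring genuine care, and the closest thing to an obstacle, is aligning the quantifier domains so that the two theorems really are inverse statements about the same class of objects: Theorem \ref{proof1} quantifies only over PBRS functions that conserve the optimal policy, so I would confirm that this is exactly the correct target set by appealing to the corollary following Theorem \ref{PBGRM_theorem}, which guarantees that every valid GRM reward is itself optimality-preserving. With that consistency check in hand, no GRM function escapes the PBRS family and no non-optimality-preserving PBRS function needs to be matched, so the two containments are genuine inverses and the equivalence follows immediately.
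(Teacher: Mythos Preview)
Your proposal is correct and matches the paper's approach exactly: the corollary is stated immediately after Theorem~\ref{proof1}, and the paper's justification is simply that Theorem~\ref{proof1} together with the inverse direction already established in Theorem~\ref{PBGRM_theorem} suffices to give the equivalence. Your additional remark about aligning the quantifier domains (ensuring both theorems speak about optimality-preserving shaping functions) is a sensible sanity check that the paper leaves implicit.
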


Though we have shown theoretically that all valid GRM shaping functions conserve the optimal policy, we have not shown as of yet any general rules for how to select a ``good" matching function given some environment and shaping reward. While this is a difficult question that has plagued PBRS literature in general, in Section \ref{sec:empirical}, we define a representative subset of possible matching functions\footnote{This subset includes Equation \ref{pbimmatching}, and thus includes PBIM.} and empirically validate some general heuristics for what makes an effective matching function. We demonstrate that both GRM and PBIM speed up training over the base IM being used, as well as helping the agent avoid converging to a policy that is suboptimal in the original environment.

\subsection{A Representative Subset of Plausible GRM Functions}

As there are infinitely many GRM matching functions to choose some, in order to empirically evaluate GRM as a class of methods, we have to select some representative plausible subset of these methods. We would like it to be a subset of possible matching functions that includes PBIM, with a matching function as defined as in Equation~\eqref{pbimmatching}. We define a subset of matching functions parameterized by a delay parameter $D$, where for the $D=0$ case the resulting matching function becomes Equation~\eqref{trivial_m}, while for $N-1 \leq D < \infty$ it becomes Equation~\eqref{pbimmatching}. We define
\begin{equation}\label{hyperparam_m}
    m_{t,t'}(D) = \begin{cases}
        1 & \text{if } t - t' = D \\
        1 & \text{if } t = N-1 \text{ and } t' > N - D\\
        0 & \text{otherwise}
    \end{cases}
\end{equation}
to be the class of valid matching functions parameterized by $D$. Intuitively, this matching function accounts for each reward either exactly $D$ time steps after it has been received by the agent, or in the final time step of the episode, whichever comes first.Implementing it with Equation~\eqref{drawer} gives the shaping function
\begin{equation}\label{hyperparam_tuning}
    F^{\text{'GRM}}_t(D) = \begin{cases}
        F_t & \text{if } t < D\\ 
        F_t - \gamma^{-D}F_{t-D} & \text{if } D \leq t < N-1\\
        \sum\limits_{i = 0}^{D-1} - \gamma^{i - D} F_{N - 1 - D + i}& \text{if } t = N-1.
    \end{cases}
\end{equation}
Intuitively, Equation~\eqref{hyperparam_tuning} subtracts the intrinsic reward obtained at time step $i$ later along the trajectory at time step $i+D$ or $N-1$, whichever comes first. If $D$ is very high, this becomes equivalent to PBIM, acounting for each intrinsic reward in the final time step of the episode. If on the other hand, we choose $D=1$ (the lowest non-trivial value of $D$), the shaping reward becomes
\begin{equation}
   F'_t = \begin{cases}
        F_t & \text{if} \quad t  = 0 \\ 
        F_t - \frac{F_{t-1}}{\gamma} & \text{if} \quad 0 < t < N-1\\
        -\frac{F_{t-1}}{\gamma} & \text{if} \quad t = N-1,
    \end{cases}
\end{equation}
which essentially implements Equation~\eqref{shaping_reward} with $\Phi_t=\frac{F_t}{\gamma}$  at all time steps but the first and last.

In Section \ref{sec:empirical}, we will empirically demonstrate the efficacy of methods sampled from this parameterization of GRM at both speeding up training and preventing the agent from converting to a suboptimal policy.

\section{Empirical Results}\label{sec:empirical}

Here, we empirically demonstrate the efficacy of PBIM/GRM for both
an exploration-based tabular IM reward and for RND \cite{burda2018exploration}. For these results, we focused on environments with a knowable set of optimal policies where the base IM demonstrably alters performance and other IM approaches not guaranteeing optimality fail to converge towards an optimal solution. The former of these demonstrations most clearly shows GRM’s potential to speed up convergence when compared to either a baseline IM or no IM, while the latter demonstrates GRM’s ability to preserve an agent’s convergence to an optimal policy, even with an IM that would otherwise explicitly alter the optimality of that policy.

\subsection{Minigrid Doorkey}\label{minigrid}

We demonstrate an improvement in both speed of convergence and performance of the converged-to policy when using our method to confer optimality guarantees to a tabular exploration reward term in the MiniGrid DoorKey 8x8 environment \cite{minigrid}. This environment, an example of which is pictured in Figure~\ref{doorkey}, challenges the agent to reach a goal state in the bottom-right corner by picking up a key, carrying it to a door, then unlocking that door. The environment itself has sparse rewards, returning a reward of 1 for successfully reaching the goal and 0 for every other transition. It is also partially observable, as the agent can only see at most in a 7x7 grid ``in front'' of it. The maximum episode length in this environment is 640 steps.

\begin{figure}
    \centering
    \includegraphics[width=0.8\columnwidth]{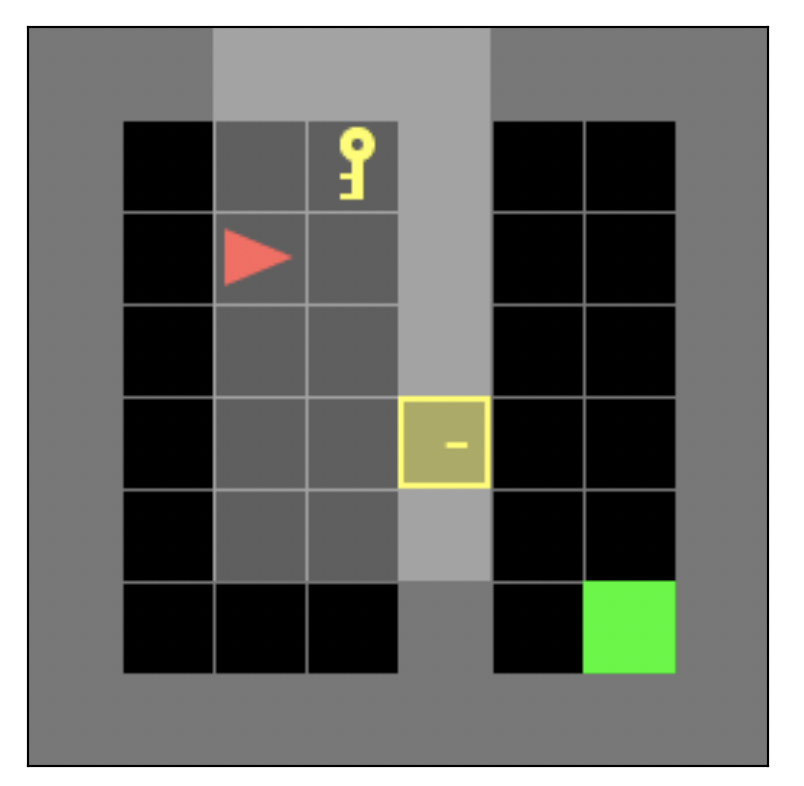}
    \caption{An example MiniGrid DoorKey 8x8 environment.}
    \label{doorkey}
\end{figure} 

We used a tabular exploration reward of the form 
\begin{align}\label{tabular_im}
   F_t = \frac{\alpha}{n(s)}, 
\end{align}
where $n(s)$ is the number of times a state has been previously visited within an episode, and $\alpha$ is a coefficient controlling the magnitude of exploration reward relative to the environment reward. A tabular reward was chosen for simplicity, given the simple environment, and also for explainability, as it is much clearer to analyze its alterations to the optimal strategy than it would be for intrinsic motivation methods based, for example, on the output of a neural network. 
This particular tabular reward form appears in previous literature \cite{strehl2008analysis, burda2018exploration}.   

As the environment itself is partially observable, and we wish to demonstrate the versatility of our method when applied to reward functions with dependence on arbitrary variables, we defined the ``state'' counter $n_t$ not based on the agent's observation space, but on information internal to the environment itself regarding the agent's position and whether it was holding the key. So, for example, the first time the agent visited state $\{3,4,0\}$ (\{``3rd vertical position,'' ``4th horizontal position,'' and ``not carrying the key''\}), it would receive a reward of $1$, followed by $\frac{1}{2}$, $\frac{1}{3}$, etc.\footnote{To avoid a combinatorial explosion, the agent's direction and the door's status (locked/unlocked/open) were not incorporated into this reward.}

It is worth expanding on the some of particular ways that the IM in this experiment changed the set of optimal policies in this environment. With Equation \ref{tabular_im}, the particular intrinsic reward we used in this environment---though certainly not unique to it; see \cite{burda2018large} as well as Section \ref{small_cliffwalker}---there is a danger of the agent ``procrastinating,'' and collecting more exploration rewards than are strictly necessary to reach the goal state, rather than taking the most efficient action at each time step. Let's examine a simple version of this: if an agent can either take the most efficient action to proceed toward the goal state, or wait for $t$ time steps before doing so (say, by either alternating ``turn left'' and ``turn right'' actions, or by taking the ``toggle'' action repeatedly when there is no door to toggle), then which course of action is preferred to the other depends on its intrinsic reward for the current grid tile. If it has visited its current tile a total of $n$ times (including the current time step), then stalling for $t$ time steps before proceeding to the goal becomes strictly preferred to the optimal action if the condition
\begin{equation} \label{procrastination_condition}
    1-\gamma^t < \sum_{t' = 0}^{t-1}\alpha \gamma^{t'}\frac{1}{n+t'+ 1}
\end{equation}
is met. Here, the left side of Equation \ref{procrastination_condition} refers to the cost of delaying the step at which it will reach the goal state by $t$ time steps, and the right side refers to the intrinsic reward gained by doing the same. 

Thus the set of optimal policies in this environment can be altered by the shaping reward in Equation 41 for sufficiently high values of $\alpha$ and $\gamma$. To test how violations of this inequality affect performance, we tested at values of $\alpha$ and $\gamma$ which vary in the frequency with which we can expect Equation~\ref{procrastination_condition} to be satisfied. In particular, higher values of both $\alpha$ (a positive coefficient in the right side of Equation \ref{procrastination_condition}) and $\gamma$ (positive on the right side of Equation \ref{procrastination_condition} and negative on the right side) will increase the frequency of states in which the optimal policy will be altered by this intrinsic reward.

We iterated on the initial codebase of \cite{minigrid_code} for our implementation.

We used the Proximal Policy Optimization (PPO) algorithm as introduced by \citep{schulman2017proximal}. We included a Long Short-Term Memory (LSTM) layer \cite{hochreiter1997long}, a type of recurrent layer, in the network architecture to deal with the partially-observable, non-Markovian nature of the environment.

We tested eight reward schemes: a control that recieved no intrinsic rewards, base intrinsic rewards without any PBRS, non-normalized PBIM as implemented in Equation~\ref{naive_conversion}, normalized PBIM as implemented in Equation~\eqref{conversion}, GRM as implemented in Equation~\eqref{hyperparam_tuning} with $D=10$, GRM with $D=1$, and a normalized version of both of these latter methods. For the normalized methods, we used a cumulative average of the rewards across all episodes and workers initialized at the beginning of training to compute $\bar{F}$. We consider a policy to converge when it does not truncate an episode without reaching the goal state during the last 1,000 recorded steps.

We ran 16 processes concurrently for 10 million cumulative steps, and recorded performance every 128 steps. 

 We use episode length as a proxy for performance, as the (extrinsically) optimal policy of the environment is solely to reach the goal state as quickly as possible. Episode lengths for each of our sets of tested hyperparameters are depicted in Figures~\ref{frame_results_005},~\ref{frame_results_02},~\&~ \ref{fig:025-full}, respectively. Shaded regions here represent standard deviation among the 16 processes, rather than error on the mean: this is expected to be high, because MiniGrid environments are procedurally generated and variance in the optimal path length from one episode to another is expected. Table~\ref{table_minigrid} contains the time to convergence $T$ (if converged, N/A if they did not converge), mean episode length $\bar{N}$, and standard deviation $\sigma$ after convergence for each reward scheme. $\bar{N}$ and $\sigma$ were calculated from the last 1,000 data points. $T$ was determined by the first time step in which the average episode length falls below $\bar{N}$ for that run.
 
 For each pair of results in each domain we performed a 1-sided T-test, and
notate all statistically significant differences from the best-performing converged policy in Table \ref{table_minigrid}.

\begin{figure*}[t]
    \centering
    \begin{subfigure}{0.49\textwidth}
        \centering
        \includegraphics[width=\textwidth]{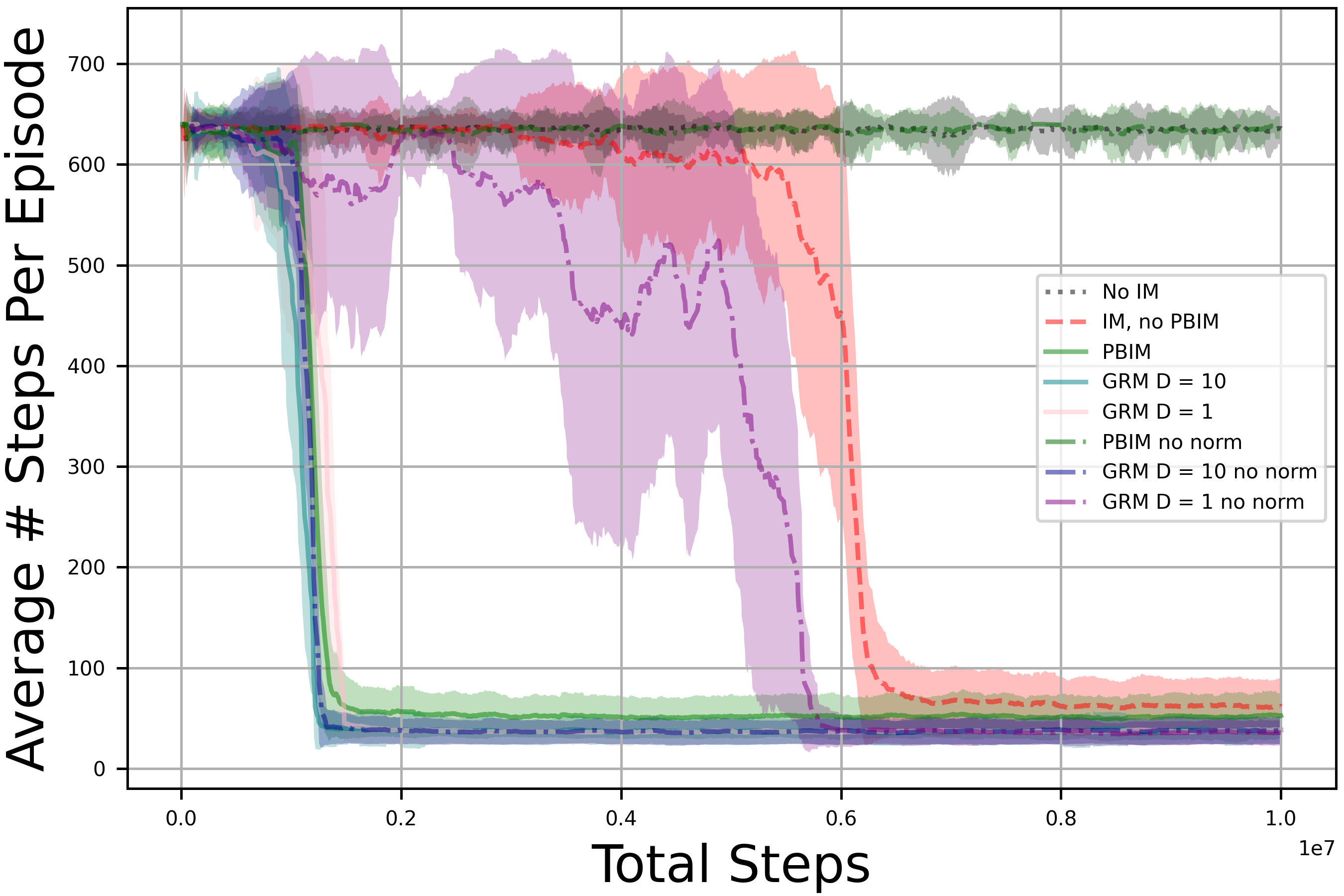}
        \caption{$\alpha = .025$, $\gamma = .995$}
        \label{fig:025-full}
    \end{subfigure}
    \begin{subfigure}{0.49\textwidth}
        \centering
        \includegraphics[width=\textwidth]{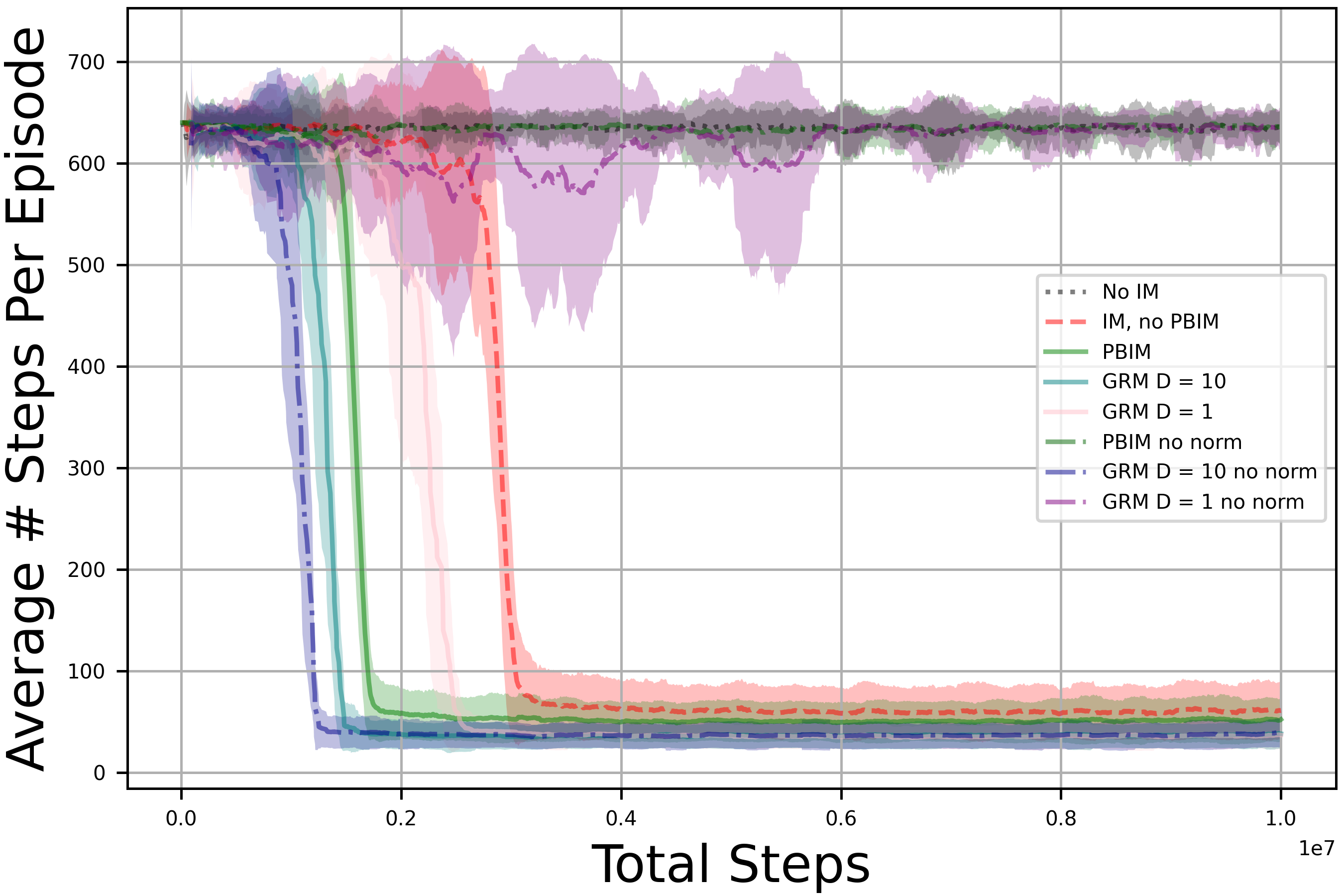}
        \caption{$\alpha = .02$, $\gamma = .995$}
        \label{frame_results_02}
    \end{subfigure}
    \begin{subfigure}{0.49\textwidth}
        \centering
        \includegraphics[width=\textwidth]{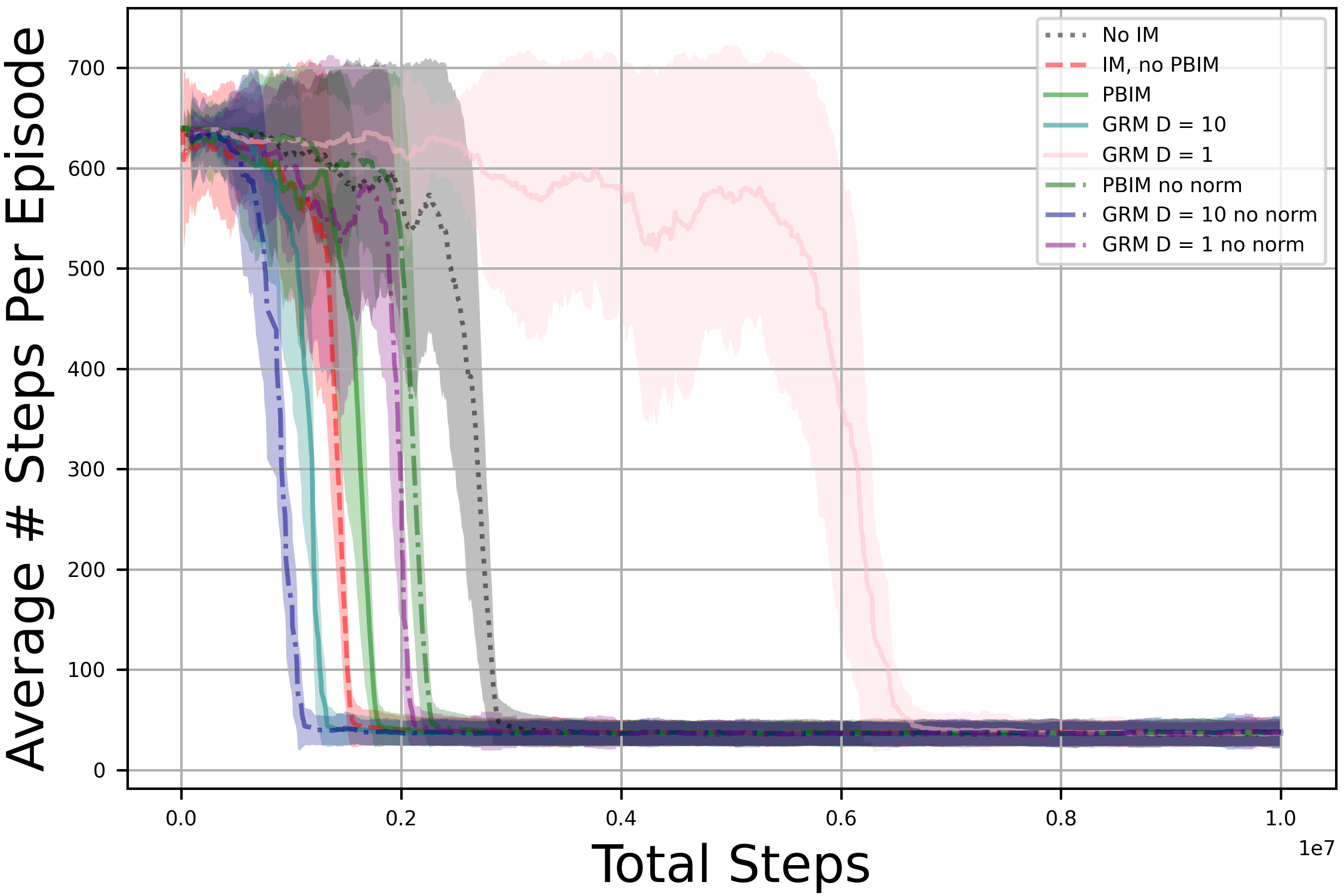}
        \caption{$\alpha = .005$, $\gamma = .99$}
        \label{frame_results_005}
    \end{subfigure}
    \caption{(\subref{fig:025-full}), (\subref{frame_results_02}), \& (\subref{frame_results_005}) Frames per episode for each method (lower is better). The shaded region represents standard deviation, and plots are of a 100-point moving average.}
    \label{frame_results}
\end{figure*}


%
\begin{table*}[t]
    \begin{center}
        \begin{small}
            \begin{sc}
                \begin{tabular}{lccc|ccc|ccc}
                    \toprule
                    & \multicolumn{3}{c|}{$\alpha=0.005$, $\gamma = 0.99$} & \multicolumn{3}{|c|}{$\alpha = 0.02$, $\gamma = 0.995$} & \multicolumn{3}{|c}{$\alpha = 0.025$, $\gamma = 0.995$} \\
                    \midrule
                    & $T$ & $\bar{N}$ & $\bar{\sigma}$ & $T$ & $\bar{N}$ & $\bar{\sigma}$ & $T$ & $\bar{N}$ & $\bar{\sigma}$ \\
                    \midrule
                     NO IM   & 2.95e6&\textbf{35.9}&  12.0    &           N/A&634.8&  18.55    &         N/A&634.8&  18.6      \\
                    IM & 1.46e6&\textbf{37.3}&13.1&                       2.88e6&60.5&27.5 &              6.07e6& 62.0&27.9\\
                    PBIM    & 1.67e6& \textbf{36.4}&12.5 &                  1.67e6&51.8 & 22.4 &            1.26e6&51.2& 22.7\\
                    PBIM NO NORM   & 2.29e6& \textbf{37.1 }& 13.4&            N/A & 635.9&14.9 &              N/A & 634.7 &18.8\\
                    GRM D=10    & 1.85e6& \textbf{36.8}&13.4 &              1.50e6&\textbf{37.1}& 13.1 &            1.19e6&\textbf{38.0}& 14.6\\
                    GRM D=10 NO NORM & \textbf{1.01e6}&\textbf{37.4}&13.6&         \textbf{1.17e6}&\textbf{37.6}&13.7 &              \textbf{1.18e6}& \textbf{37.6} &13.3\\
                    GRM D=1    & 8.71e6& \textbf{36.9} & 14.1         &   2.62e6 & \textbf{36.3}&12.9  &                     1.67e6 & \textbf{37.5}& 13.4\\
                    GRM D=1 NO NORM  & 5.39e6& \textbf{36.2} &12.8   &   N/A & 634.5&   19.4&                    5.86e6 & \textbf{35.7} &  12.1      \\

                    \bottomrule
                \end{tabular}
            \end{sc}
        \end{small}
    \end{center}
    \caption{Time to convergence ($T$), mean steps per episode after convergence ($\bar{N}$), and average standard deviation of steps per episode after convergence ($\bar{\sigma}$) for three parameter settings. Lower $\bar{N}$ and $T$ are better. Fastest $T$ in bold, and lowest $\bar{N}$ or any method with an $\bar{N}$ statistically indistinguishable from that run at $p = .05$ are in bold (differences between bold and non-bold $\bar{N}$ are statistically significant).}
    \label{.005} \label{.02} \label{.025}\label{table_minigrid}
\end{table*}

\subsubsection{Discussion: PBIM and GRM Outperform Baseline IM}

As can be seen in Figure~\ref{frame_results} and Table~\ref{table_minigrid}, PBIM and GRM both consistently outperform the baseline IM method in terms of converging to an extrinsically better policy. Additionally, when the IM most often changes the optimal policy (Figure~\ref{fig:025-full}), all but one of our methods converge faster than the baseline, and as predicted, the degree to which they outperform the baseline is correlated with the frequency with which the optimal policy is altered (compare to Figure~\ref{frame_results_02}).

The only experiment in which our (normalized) methods did not all consistently outperform the baselines in both speed of convergence and final policy was with $\alpha = 0.005, \gamma = 0.99$. Here though, as can be seen in Table~\ref{table_minigrid}, not only did our best-performing method in the other experiments still outperform the baseline IM,\footnote{Our second-best-performing method in the other experiments technically converges more slowly here than the baseline IM according to Table \ref{table_minigrid}, but this is mostly an artefact of the way we have chosen to calculate convergence: this method learns how to consistently reach the goal at all before the baseline IM, but takes slightly longer than the baseline IM to lower its average time to reach that goal. Looking at Figure~\ref{frame_results_005}, it can be argued that this choice of metric does not do justice to the normalized form of GRM $D=10$, but we are reporting this definition of convergence regardless, as we chose it before running our experiments.} but all but one converged more quickly than the no-IM run. In the following sections, we will analyse and discuss the results for each of our three classes of methods separately.

\subsubsection{Discussion: PBIM}
 Though not providing as consistent or drastic of an improvement as our GRM-based methods here, PBIM successfully improves both convergence time and efficiency of the converged policy when compared with both baselines. Note also that our modification of Equation~\ref{naive_conversion} into Equation~\ref{conversion} was key in allowing for convergence in the more difficult environments of Figures~\ref{frame_results_02}~\&~\ref{fig:025-full}. The environment in which it did not improve convergence time over the IM baseline can be explained by noting, as discussed in Section~\ref{converting}, that there is a reward horizon for the lack of intrinsic rewards' utility in the PBRS agent, and in simpler environments (such as that with a lower $\gamma$ value), there is a risk of this reward horizon being successfully learned before the environment itself is fully solved. If this happens, PBRS-based methods can slow the speed of convergence over non-optimality-preserving IM, rather than increase it, by teaching the agent to ignore the IM term prematurely. We note, however, that both normalized and non-normalized PBIM converged more quickly in this case than the run with no IM. In this worst-case scenario, then, PBIM still provides value by facilitating a trade-off between preventing reward hacking and increasing training efficiency.

\subsubsection{Discussion: GRM D = 1}\label{sec:d1explanation}

When compared with PBIM, these runs across all environments consistently converge less quickly, but to a more consistently optimal policy. This is to be expected, when analyzing the difference between these methods and PBIM from a reward-horizon-extending viewpoint following \cite{laud2004theory}. While PBIM represents one end of the spectrum of GRM methods parameterized by Equation~\eqref{hyperparam_tuning}, the one in which compensatory adjustments to the intrinsic motivation are furthest-removed in time, these methods represent the other end, in which the adjustments to preserve optimality happen only a single time step after the reward for which they are adjusting. Thus, while these methods are more at risk of being quickly discovered by the agent to be providing essentially useless IM, they are also less likely than PBIM to run the risk of ``fooling'' the agent too thoroughly, to the point where it doesn't quite converge to the optimal policy in a timely manner.

Another point of note is that, in all three hyperparameter settings tested here, one of the $D=1$ runs performed substantially better than the other, but that in the $\gamma = .99$ case, which run performed well (normalized vs. non-normalized) switched from the other two cases. In the remainder of this section, we offer an explanation for why this occurred.

In the $D=1$ non-normalized case, for all timesteps but the first and last, $F' = F_t - \frac{F_{t-1}}{\gamma}$. Because there is only a single timestep between the intrinsic reward being accounted for and that which is being obtained by the agent, there is unlikely to be as much of a difference in these terms as in the other methods, as they will be obtained in the same region of the state space. As such, both the $D = 1$ normalized and non-normalized runs will be dominated by terms in which we can approximate $F_t \approx F_{t-1} = F_{approx}$, or $\Delta F = F_t - F_{t-1} \approx 0$. Interpreting our results in light of this approximation, they become more clear.

Under this approximation, the non-normalized reward becomes
\begin{align}
    F' \approx -\frac{1-\gamma}{\gamma}F_{approx}
\end{align}
for most timesteps, except for when going from a high-IM region to a low-IM region. This reward, for a consistently positive IM, is consistently negative, and more negative in the higher-IM regions of the space. However, the actual immediate reward for transitioning from a low- to high-IM region of the environment is still positive. Because of this, in high-$\gamma$ environments, we should expect the non-normalized version of this method to do poorly, as it will be biased away from existing for long periods of time in high-IM regions of the state space. However, it should perform better in environments with a low $\gamma$, as this bias becomes outweighed by the immediate bias towards the immediate reward obtained by traversing from a low-IM to high-IM region of the state space. Those results are precisely what we see here. Also of note, for the non-normalized version, is that it performs better with high $\alpha$ than with lower $\alpha$: this can be explained by the IM reward signal being stronger in with higher $\alpha$ values.

For the $D=1$ normalized case, in our approximation, the reward becomes
\begin{align}
    F' \approx \frac{1-\gamma}{\gamma}\left(\bar{F} - F_{approx} \right).
\end{align}

This significantly mitigates the problems faced by the non-normalized version by cutting down the bias against high-IM regions by an order of magnitude, allowing the relatively infrequent cases where $\Delta F \neq 0$ to more easily dominate. This explains why the normalized version of GRM $D=1$ performs (relatively) well in the $\gamma = .995$ environments where the non-normalized version fails. However, this comes at a cost, one that becomes more significant in low-$\gamma$ environments: if it finds itself in a part of the state space where $F_{approx} << \bar{F}$, it can obtain immediate positive rewards by staying in that low-IM portion of the state space. This creates a bias against exploration when in the middle of low-IM regions in low-horizon environments, where the $\Delta F \neq 0$ to be sought might be several timesteps away. This bias has to be unlearned by the agent, resulting in convergence even later than the no-IM case in the $\gamma = .99$ environment.

Of course, we know from Theorem \ref{PBGRM_theorem} that these biases will theoretically be unlearned, given enough time, but they help explain the sometimes-unreliable practical performance of these methods in this particular environment, and demonstrate that GRM functions should ideally be chosen to best fit a particular environment, and that this is not always a trivial task.

\subsubsection{Discussion: GRM D = 10}

Of our three methods, this one was the most consistently high-performing throughout different hyperparameters in this environment, both in its normalized and non-normalized formulations. In fact, it is the only one of the three versions of our method we tested for which normalization didn't have a large effect on its performance in any of our runs. It seems to exist in a ``sweet spot'' between PBIM and GRM $D=1$, wherein it doesn't shift the reward bias so much as to be susceptible to the issues with PBIM discussed in Section~\ref{converting}, but separates the intrinsic being obtained by the agent and that being accounted for enough so as to be robust to the issues discussed in Section~\ref{sec:d1explanation}, as well. As such, both versions were able to converge or nearly converge more quickly than any other method tested in every configuration of this environment tested, including IM, while also converging to a statistical tie for the most optimal policy. We take this to be straightforward and compelling evidence for the efficacy of GRM at both speeding training and preventing convergence to a suboptimal policy.

\subsection{Cliff Walking}

\label{sec:cw}

We also tested our method in cliff walking~\cite{sutton2018reinforcement}, a classic reinforcement learning task in which an agent is directed to find a goal state at the end of a horizontally long grid world, where the bottom row represents ``the cliff'': a set of stats that must be avoided. The environment used is depicted in Figure~\ref{fig:cw}. In this environment, the agent starts in the bottom left tile, marked with an S, and must reach the bottom right tile, marked with a G. All other tiles in the bottom row are ``cliffs.'' At every time step, the agent can move either up, down, left, or right. Entering cliff tiles returns a reward of -100, and reaching the goal tile instead returns 100. Every other action has a reward of -1. The episode ends when the agent reaches a cliff or goal tile, or when the maximum episode length~(50 steps) is reached. The problem described in \citep{sutton2018reinforcement} includes a slip chance: a probability that the agent will move downward instead of its intended direction to introduce stochasticity, safety, and hunger of risk. We opt to use no chance of slipping in this version of the environment to focus on only the exploration aspect.
\begin{figure}
    \centering
    \includegraphics[width=\linewidth]{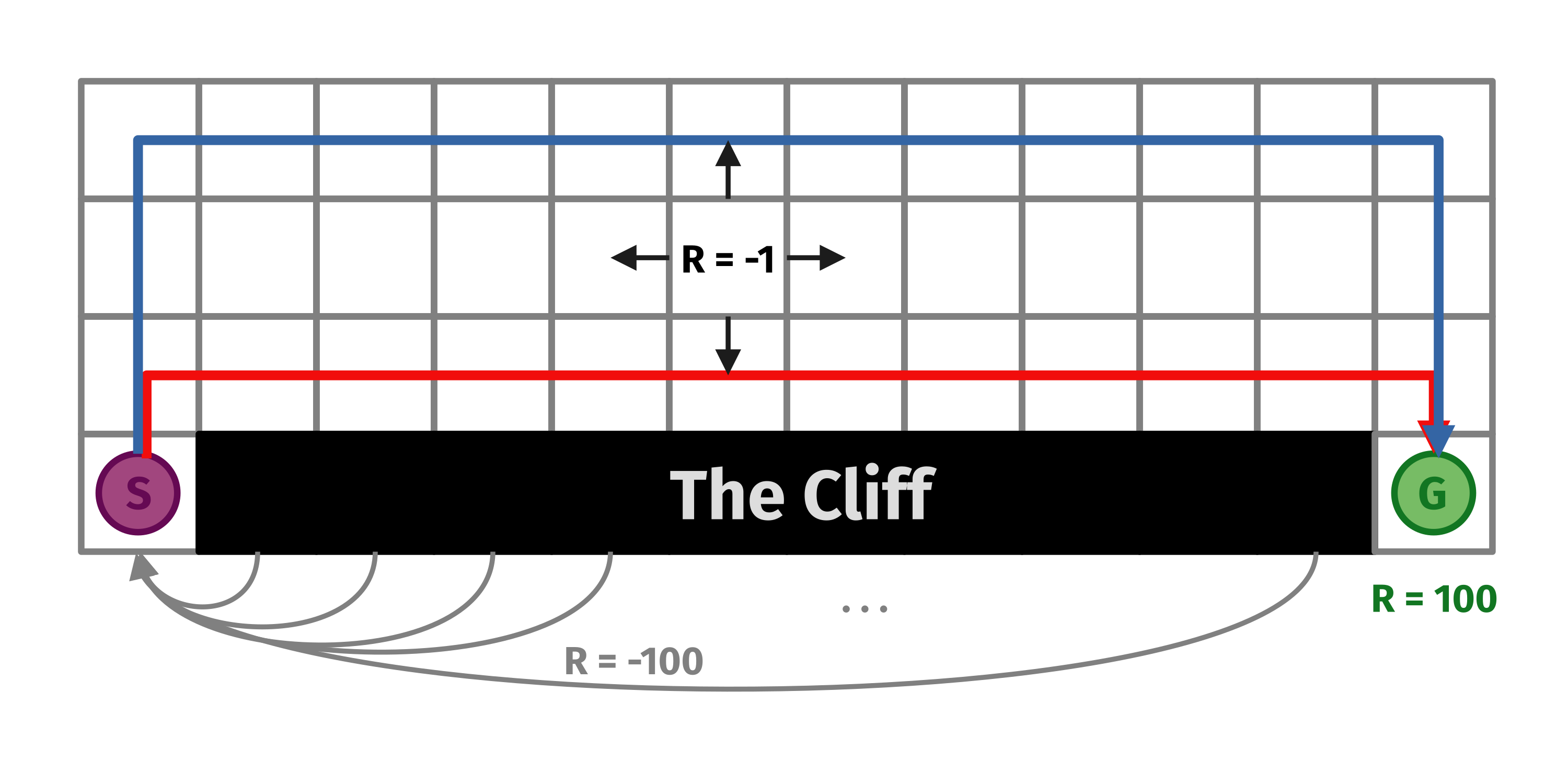}
    \caption{Cliff walking scenario (adapted from~\cite{sutton2018reinforcement}).The red arrow shows the optimal path to take when no slip chance is present, while the blue arrow shows a ``cautious,'' suboptimal path that the agent may take, particularly if it is trying to maximize some IM term in centivizing exploration.}
    \label{fig:cw}
\end{figure}

For this experiment, we trained a simple, tabular Q-learning agent with eight types of intrinsic motivation (similarly to the experiments in section~\ref{minigrid}): none, a baseline IM, non-normalized PBIM, normalized PBIM, and four GRM variants: $D=1$ and $D=10$ with and without normalization. For our baseline IM, however, rather than use the simple tabular IM from Section \ref{minigrid}, we used a Random Network Distillation (RND), a SOTA exploration IM \cite{burda2018exploration} with dependence on full training trajectories, rather than simply on an episode's history.

This experiment has a different focus than in Section~\ref{sec:empirical}:
\begin{itemize}
\item
1. We use a Q-learning agent, rather than PPO, to show that our method's effectiveness does not generally depend on the optimization algorithm used.
\item
2. We use a simple environment, where the optimal policy is trivial to determine, and easy to depict graphically. It should thus be easily apparent if an agent's policy diverges from what is optimal.
\item
3. Because this is a simpler problem, the agent here does not require nor actually benefit from the intrinsic reward. Instead, we are interested in finding out if the intrinsic reward hinders policy convergence. Put another way: we want to see if PBIM and/or GRM can prevent an agent from being distracted by an unhelpful IM term.
 \end{itemize}
 

We trained the Q-learning agents for 5000 episodes with $\gamma = 0.99$, using $epsilon$-greedy exploration with $\epsilon = 1.0$ decreasing by $5 \cdot 10^{-3}$ after every episode down to a minimum of $0.1$. Due to the small observation space, we used an RND predictor network learning rate of $10^{-6}$\footnote{A higher learning rate here would have run the risk of the RND reward later on in training becoming zero or effectively zero, trivially conserving the optimal policy and thus defeating the purpose of this experiment.} and an intrinsic reward scaling coefficient of $1000$.

\subsubsection{Discussion}
Figure~\ref{small_cliffwalker} shows each agent's training performance. As the goal is to reach the goal state as quickly as possible, episode length here is a proxy for the performance of an agent. We expect agents to start with short episodes before they learn not to fall off the cliff, then increase the episode length as they explore the environment, and finally decrease the episode length again as they reinforce the optimal route. An important point to note about this figure, as well as Figure~\ref{fig:large_cliffwalker}, is that these are episode returns \textit{during training}. This means that, due to the nonzero chance $\epsilon$ of taking a random action, policies that consistently take a suboptimal path toward the goal state will actually have higher \textit{training} loss than the optimal policy. This explains why the methods that obtain higher returns in Figure \ref{fig:returnsmall} have higher final episode lengths in Figure~\ref{fig:lengthsmall}.


\begin{figure}
    \centering
    \begin{subfigure}{\columnwidth}
        \includegraphics[width=\columnwidth]{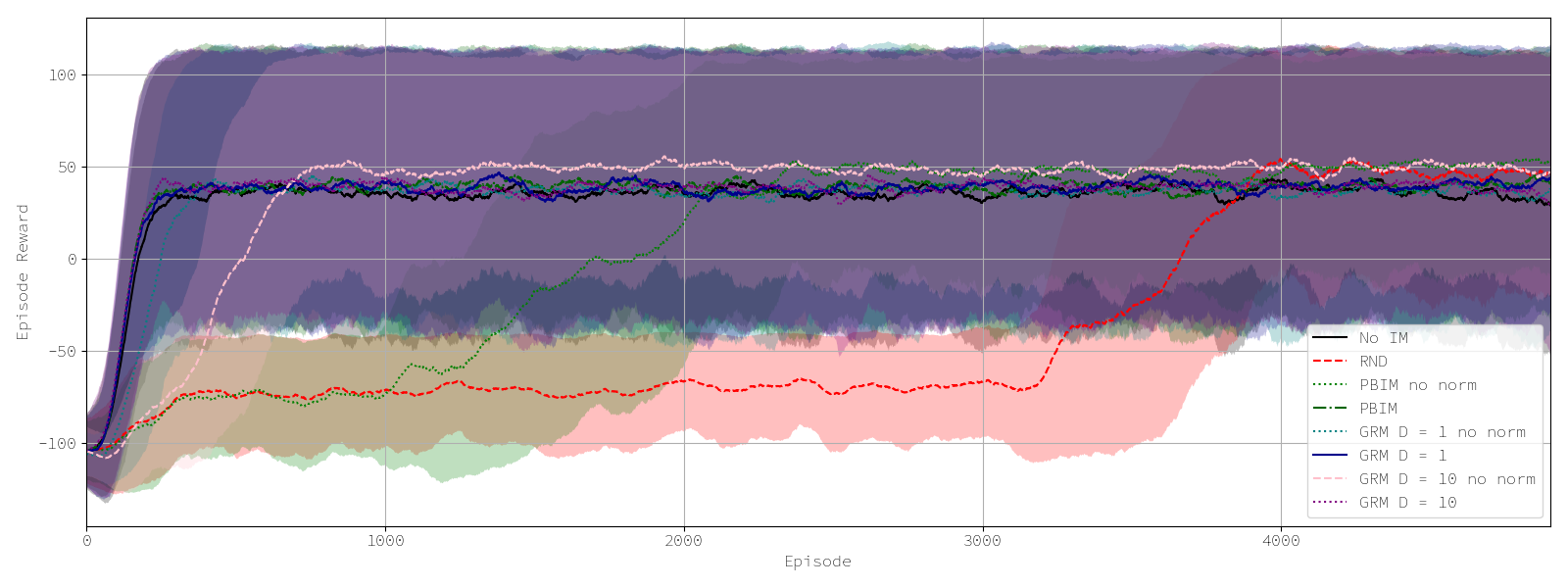}
        \caption{Average Episode Return}
        \label{fig:returnsmall}
    \end{subfigure}
    
    \begin{subfigure}{\columnwidth}
        \includegraphics[width=\columnwidth]{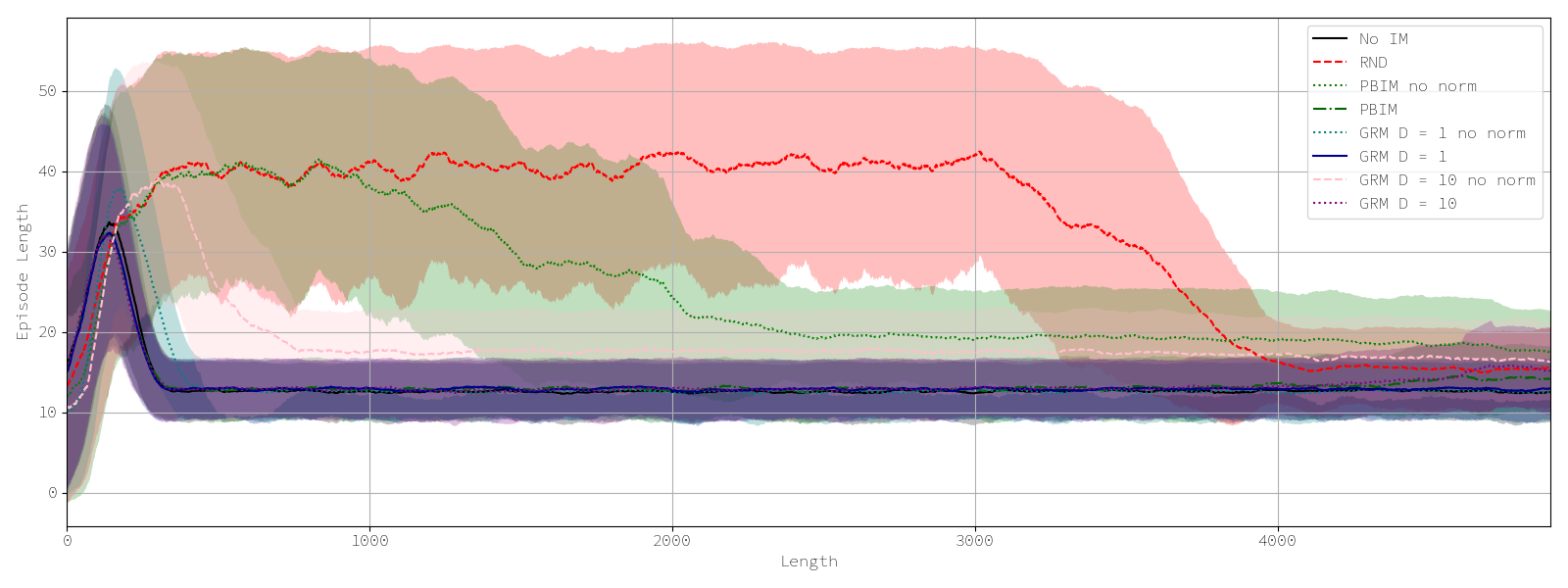}
        \caption{Episode Length}
        \label{fig:lengthsmall}
    \end{subfigure}
    \caption{Average cumulative extrinsic return and episode length for the cliff walking environment. Error bars are standard deviations over 10 runs.}
    \label{small_cliffwalker}
\end{figure}

Here, the vanilla Q-Learning agent with no intrinsic motivation is the fastest to converge, finding the optimal policy in around 500 episodes. It is important to highlight that Q-Learning reaches the optimal policy (Later demonstrated in Figure~\ref{fig:noim}). The three normalized methods (PBIM, GRM $D = 1$, and GRM $D = 10$) converge to the optimal policy at a very similar rate, as does normalized GRM with $D = 10$. 

On the other hand, the other IM methods are ``distracted'' by the intrinsic reward and converge to a suboptimal policy at a slower rate. The slowest to converge is vanilla RND exploration, only obtaining rewards similar to the other policies at around 4,000 episodes. This is a textbook example of the noisy TV problem, in which the agent is distracted from the task at hand due to the intrinsic reward. Non-normalized PBIM and GRM suffer from a similar issue, though converge much faster than base RND. GRM with $D = 1$ is only slightly distracted by the intrinsic reward and reaches optimality reasonably well. These results seem to suggest that normalization is key in this environment, and that its importance increases as the parameter $D$ is scaled up.

An analogous trend can be derived from Figure~\ref{fig:lengthsmall}: Q-Learning and the normalized IM methods quickly explore the environment before settling on the optimal policy (Which takes a total of 13 actions to execute). The non-normalized IM methods explore the environment for much longer, but similarly, take a longer time to reach convergence.

To better show the difference between suboptimal and optimal policies in a testing environment, the final average test return (in an environment with $\epsilon=0$) and episode length of each policy is depicted in Figure~\ref{fig:returnsmall_test}.

\begin{figure}
    \centering
    \begin{subfigure}{\columnwidth}
        \includegraphics[width=\columnwidth]{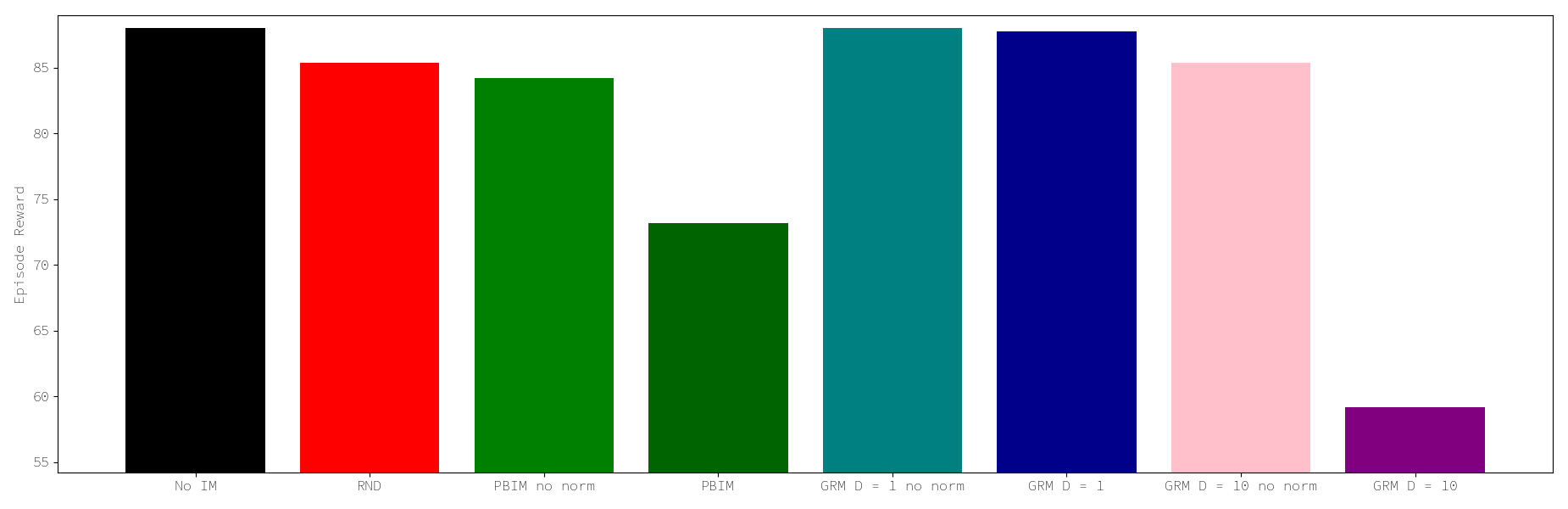}
        \caption{Average Episode Return}
        \label{fig:returnsmall_test}
    \end{subfigure}
    
    \begin{subfigure}{\columnwidth}
        \includegraphics[width=\columnwidth]{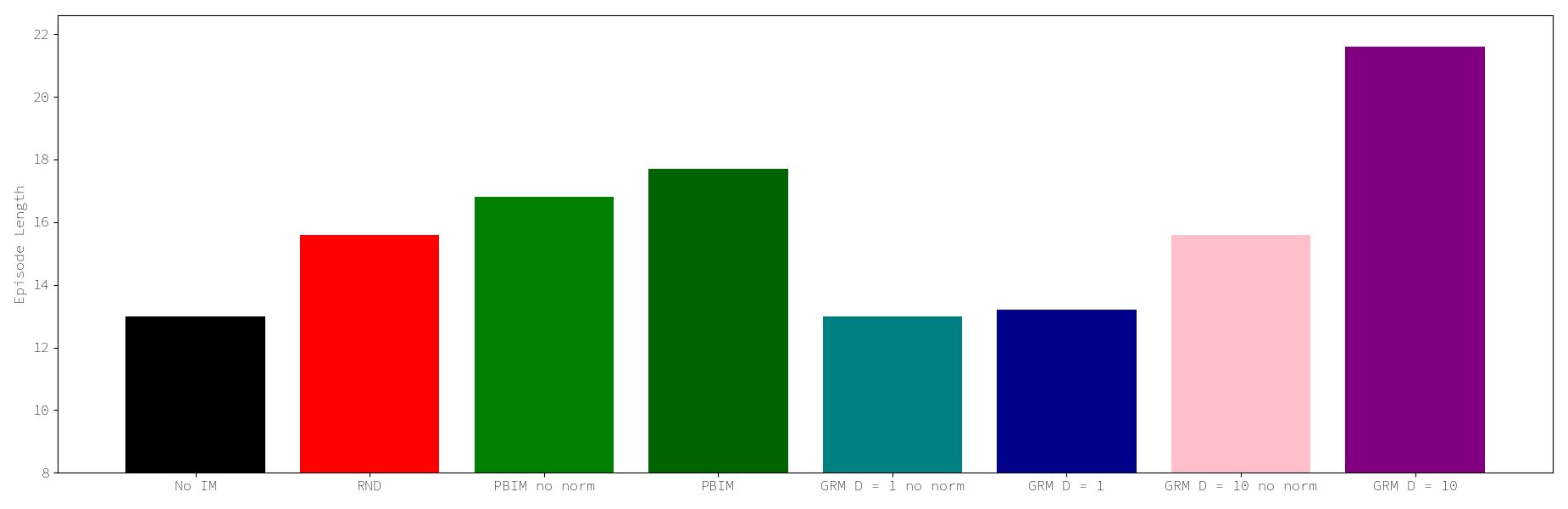}
        \caption{Episode Length}
        \label{fig:lengthsmall_test}
    \end{subfigure}
    \caption{Average cumulative extrinsic return and episode length for the final policies in the cliff walking environment. Results are averaged over 10 runs per trained agent. Note the plots are shifted on the $y$-axis.}
    \label{small_cliffwalker_test}
\end{figure}

Figure~\ref{small_cliffwalker_test} shows the average cumulative reward and episode length of the eight trained agents, averaged over the 10 runs. Three agents (No IM Q-Learning, and the two GRM with $D = 1$) consistently reach the optimal policy with a total reward of 88 and an episode length of 13. The other methods, most notably non-normalized GRM with $D = 10$ take longer to reach the goal, thus obtaining a lower reward.


\begin{figure}[t]
    \begin{subfigure}{0.49\columnwidth}
        \centering
        \includegraphics[width=\columnwidth]{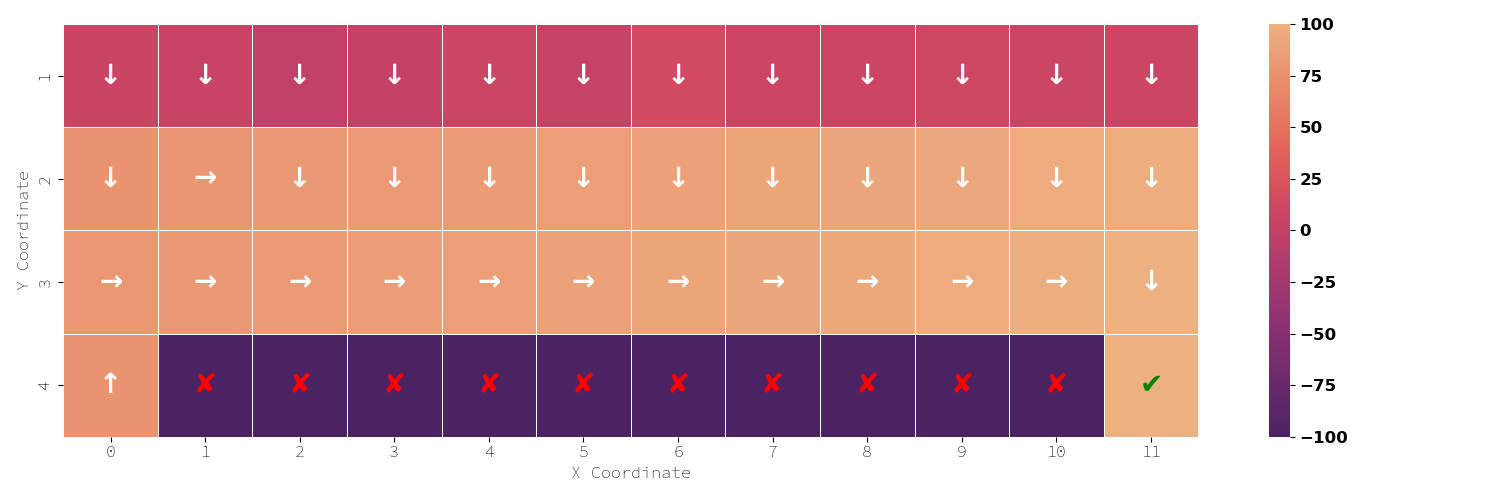}
        \caption{No IM}
        \label{fig:noim}
    \end{subfigure}
    \begin{subfigure}{0.49\columnwidth}
        \centering
        \includegraphics[width=\columnwidth]{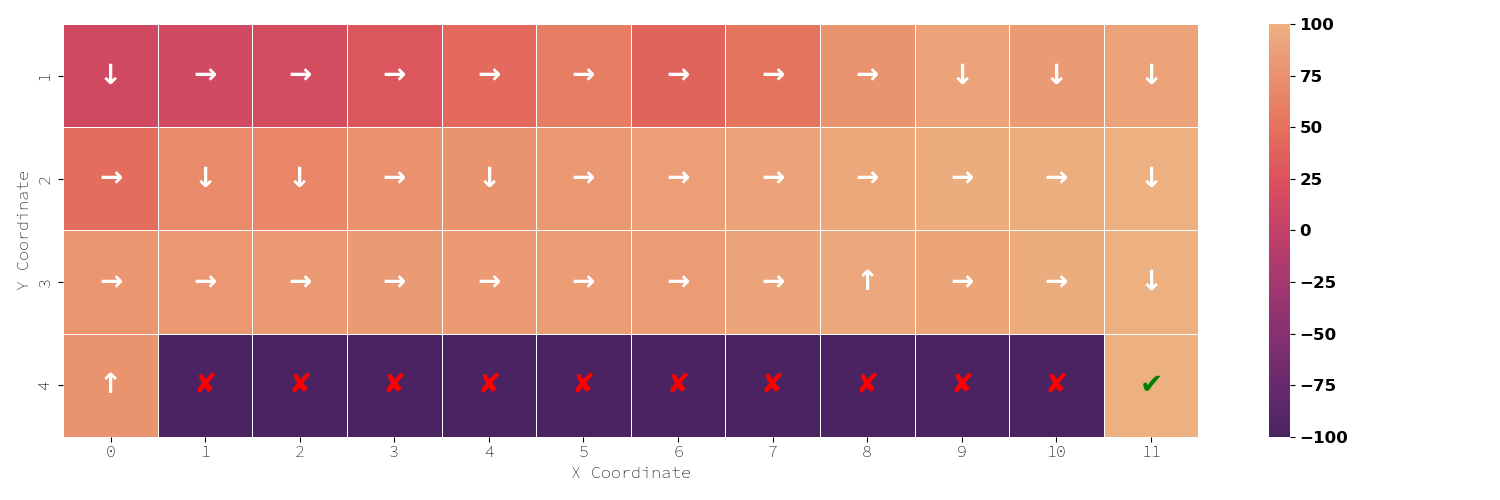}
        \caption{RND}
        \label{fig:rnd}
    \end{subfigure}
    \begin{subfigure}{0.49\columnwidth}
        \centering
        \includegraphics[width=\columnwidth]{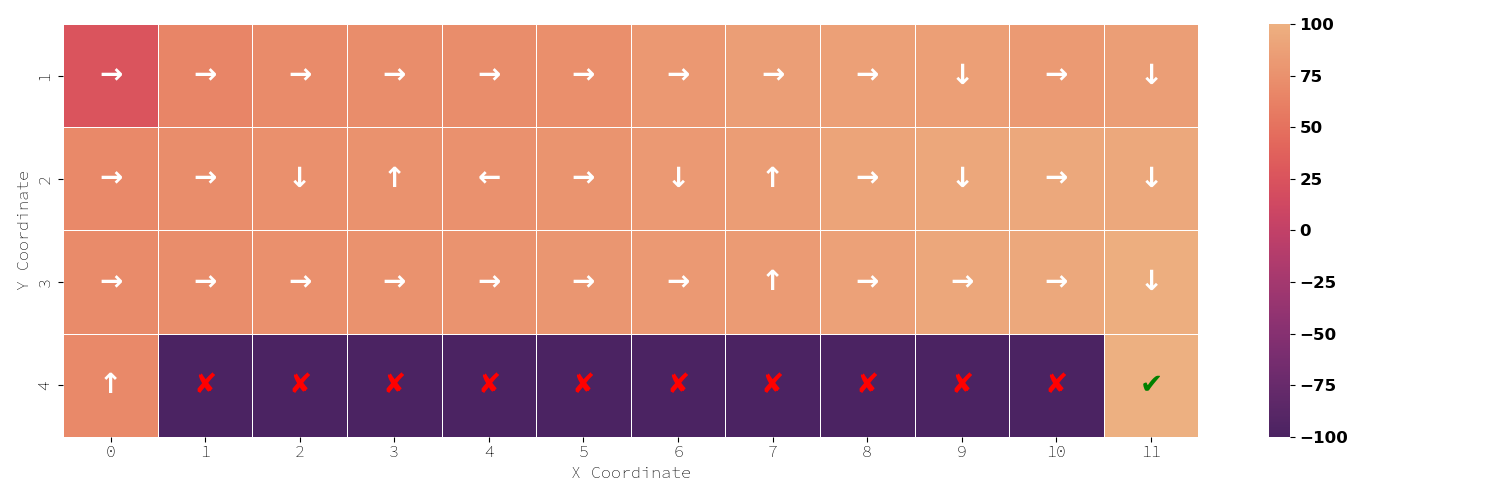}
        \caption{PBIM, not normalized}
        \label{fig:pbimnonorm}
    \end{subfigure}
    \begin{subfigure}{0.49\columnwidth}
        \centering
        \includegraphics[width=\columnwidth]{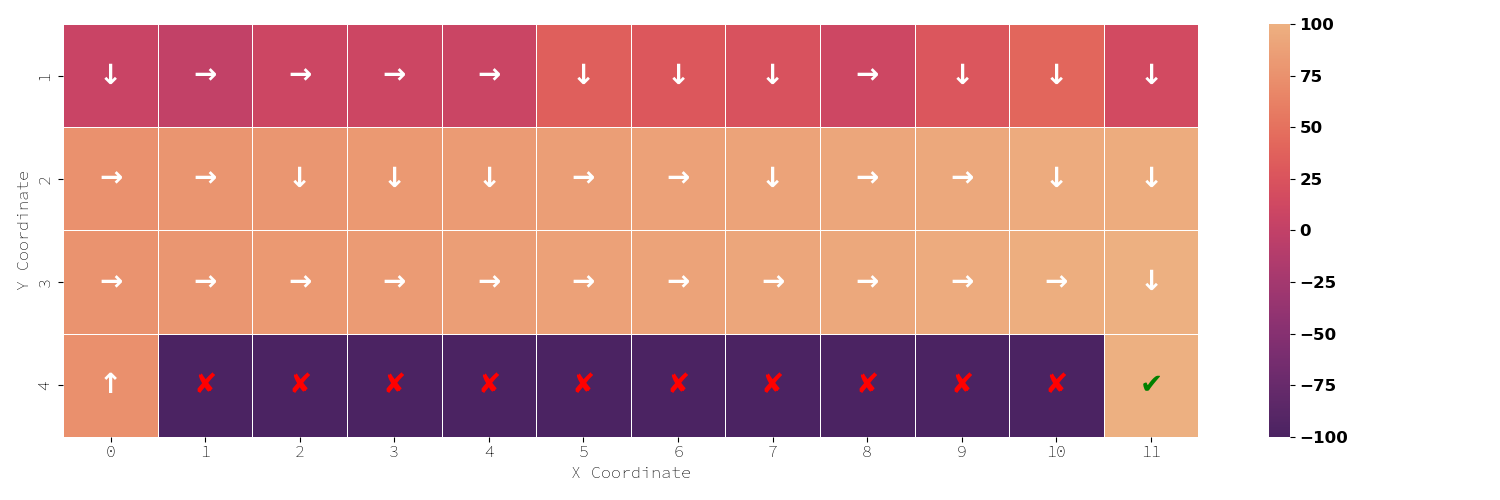}
        \caption{PBIM}
        \label{fig:pbim}
    \end{subfigure}
    \begin{subfigure}{0.49\columnwidth}
        \centering
        \includegraphics[width=\columnwidth]{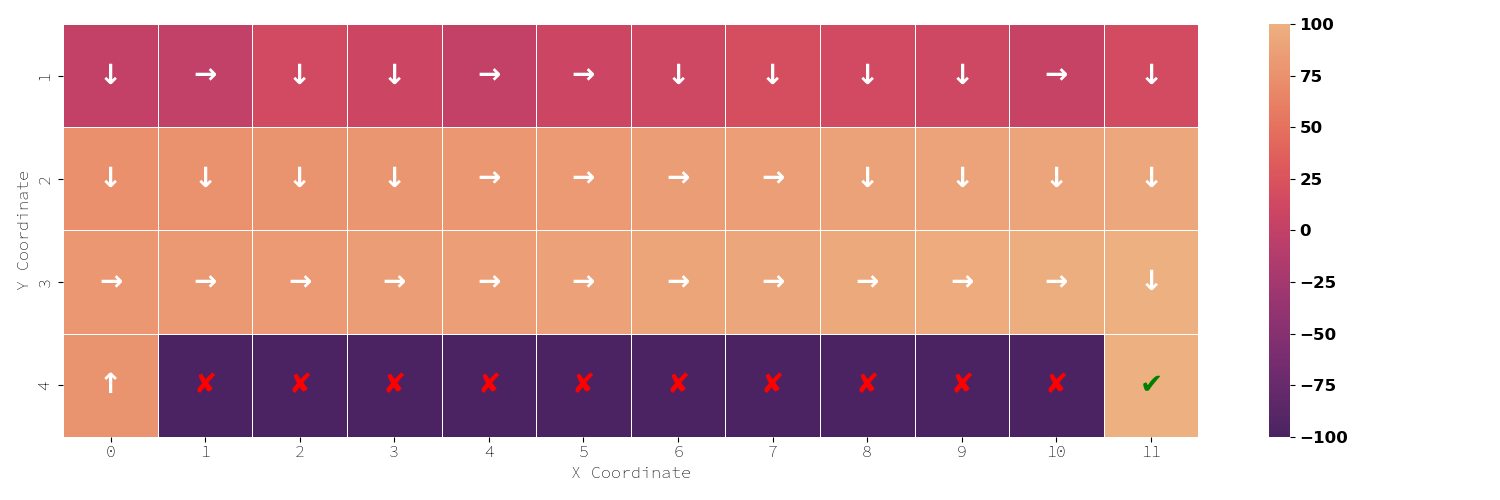}
        \caption{GRM $D = 1$, not normalized}
        \label{fig:grm1nonorm}
    \end{subfigure}
    \begin{subfigure}{0.49\columnwidth}
        \centering
        \includegraphics[width=\columnwidth]{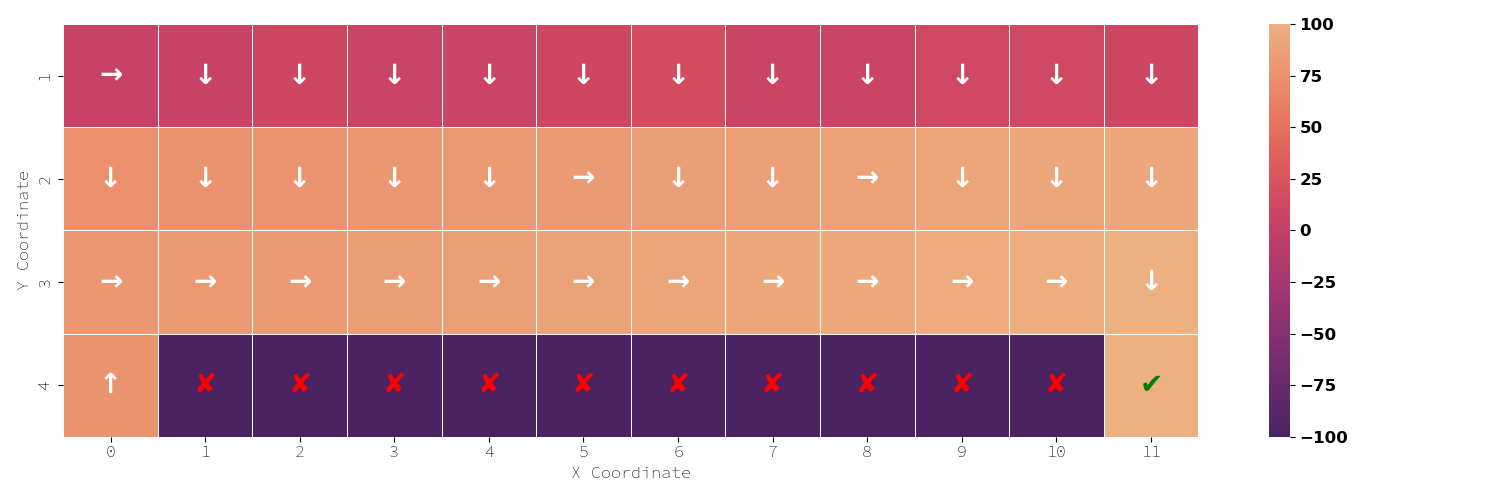}
        \caption{GRM $D = 1$}
        \label{fig:grm1}
    \end{subfigure}
    \begin{subfigure}{0.49\columnwidth}
        \centering
        \includegraphics[width=\columnwidth]{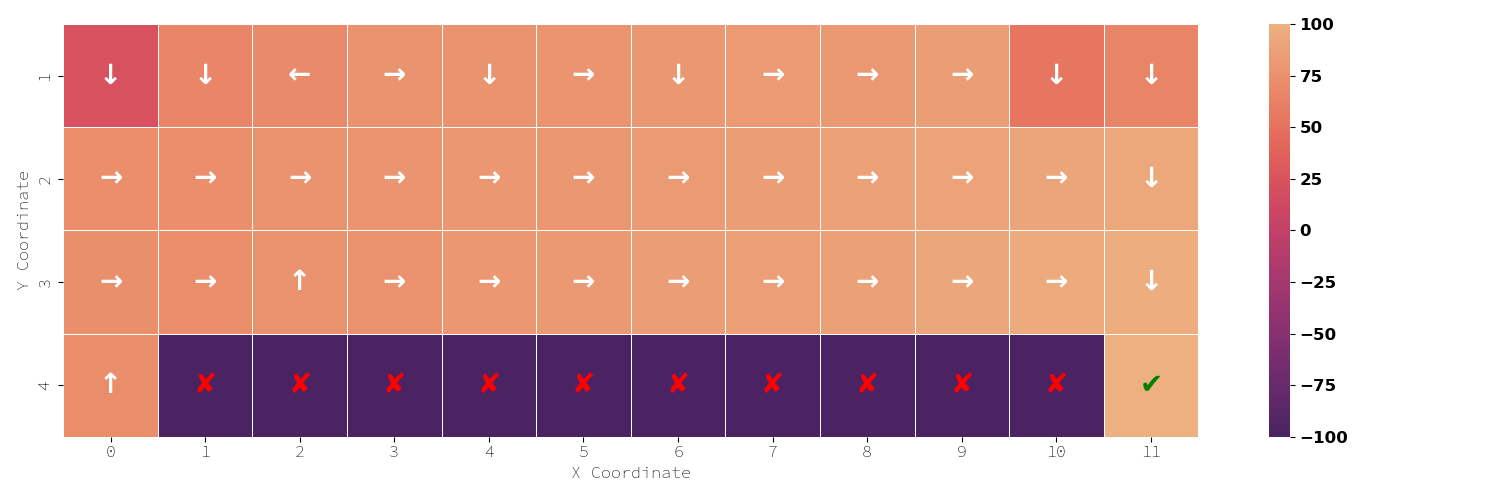}
        \caption{GRM $D = 10$, not normalized}
        \label{fig:grm10nonorm}
    \end{subfigure}
    \begin{subfigure}{0.49\columnwidth}
        \centering
        \includegraphics[width=\columnwidth]{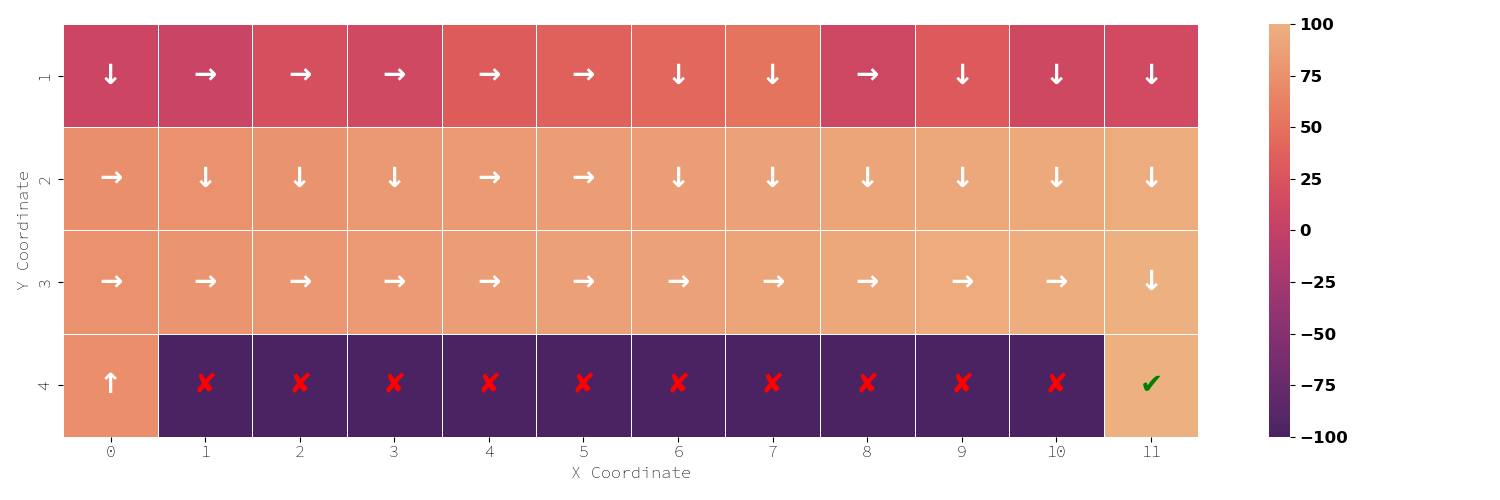}
        \caption{GRM $D = 10$}
        \label{fig:grm10}
    \end{subfigure}
    \caption{Final policies of trained agents and their estimated Q-values on the cliff walking environment. Arrows indicate the action with the highest estimated Q-value in each position. A brighter hue indicates a higher Q-value.}
    \label{Q_values}
\end{figure}

We can confirm the above statements by looking at the final policies, shown in Figure~\ref{Q_values}. The figure contains, for each IM technique, the most common action taken by the 10 trained versions of that model and the estimated Q-value of said action. Here we observe that the optimal policy is reached by Q-Learning, PBIM, both versions of GRM with $D = 1$, and normalized GRM with $D = 10$. Note that this Figure only contains the action most selected across the 10 policies, and does not account for mistakes in each individual policy.

\subsection{Long Cliff Walking}
\label{sec:lw}

The environment in Section~\ref{sec:cw} was simple enough that IM was not necessary to find a solution. To test our method's efficacy in more dauntingly sparse-reward environments that require reward shaping to solve effectively, we evaluate PBIM in a modified version of cliff walking where the goal is further away as the grid is horizontally expanded. This version features a $4 \times 50$ grid, where the start and goal are on the leftmost and rightmost tiles in the bottom row, and all other bottom tiles are cliffs.

We used the same experimental setup as the cliff walking environment, including the eight types of agents, but instead increased the number of episodes to 10,000 and the maximum steps per episode to 100. The agents were trained with $\gamma = 0.99$, $\epsilon = 1.0$ decreasing by $5 \cdot 10^{-4}$ after every episode down to a minimum of $0.1$. We used the same RND configuration: a learning rate of $10^{-6}$ and reward scaling of $1000$. The experiment is run 10 times per agent.

Compared to the regular cliff walker, the optimal policy is still plainly observable. However, the reward signal is much sparser, and traditional $\epsilon$-greedy exploration will have trouble finding the reward on the far right of the grid. We thus expect to see a much more successful performance by IM methods. In this scenario, the maximum total reward obtainable by the agent is 49, and it takes 52 steps to reach the goal.

Figure~\ref{fig:large_cliffwalker} shows the training performance for the trained agents, and Figure~\ref{fig:large_cliffwalker_test} shows the final performance with no $\epsilon$-greedy exploration. In Figure~\ref{fig:returnlarge}~we can observe that, in the long run, GRM with $D = 1$ reaches the highest reward during training, followed by Q-Learning with no IM, then PBIM with no normalization and RND. GRM with a much higher delay cannot, nor PBIM, reach the training reward of the other agents. We however observe in Figure~\ref{fig:lengthlarge} that, in terms of episode length, these ``worse-performing'' techniques achieve the shortest episode length, which could signal a closer convergence to the optimal policy as we studied in Section~\ref{sec:cw}.

Figure~\ref{fig:returnlarge_test} shows that No-IM, both GRM with $D = 1$, normalized GRM with $D = 10$, and normalized PBIM reap the highest reward on average, which hovers around 48 (1 less on average than the theoretical average). In Figure~\ref{fig:lengthlarge_test} we can similarly observe these models require the lesser number of steps to reach the goal. Still, these differences are small and the other models require at most 1 more step on average to reach the goal.

\begin{figure}
    \centering
    \begin{subfigure}{\columnwidth}
        \includegraphics[width=\columnwidth]{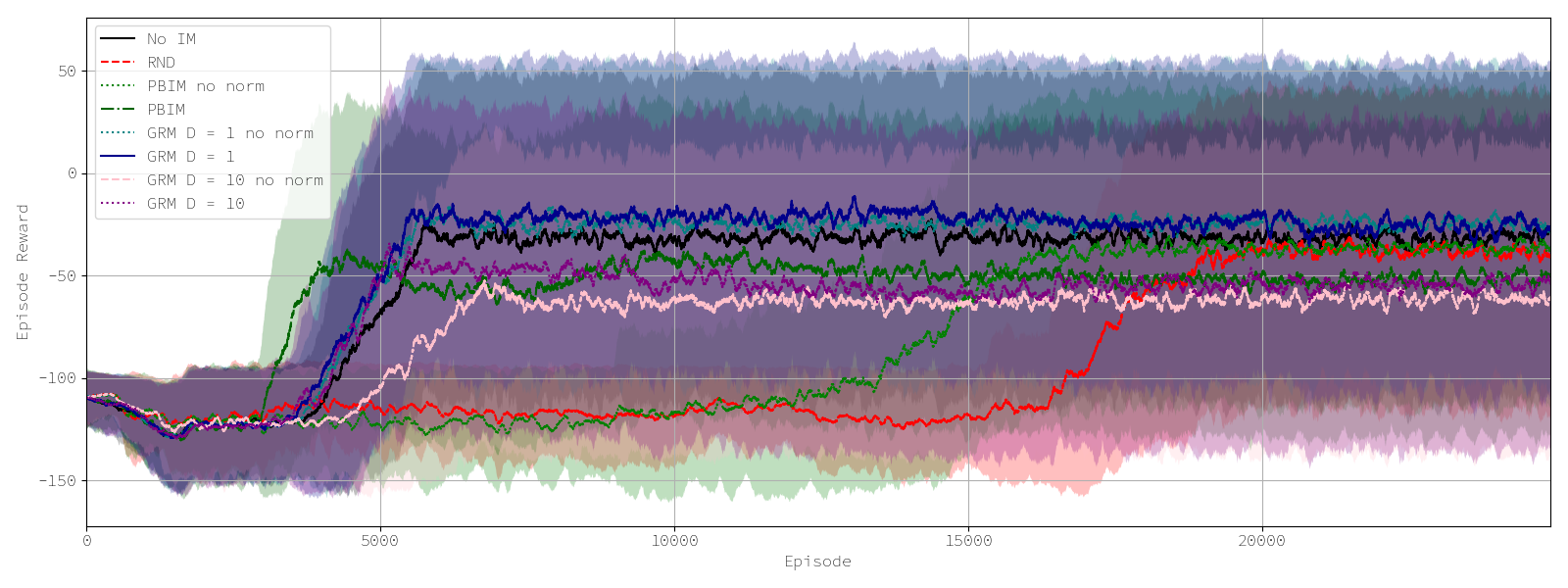}
        \caption{Average Episode Return}
        \label{fig:returnlarge}
    \end{subfigure}
    
    \begin{subfigure}{\columnwidth}
        \includegraphics[width=\columnwidth]{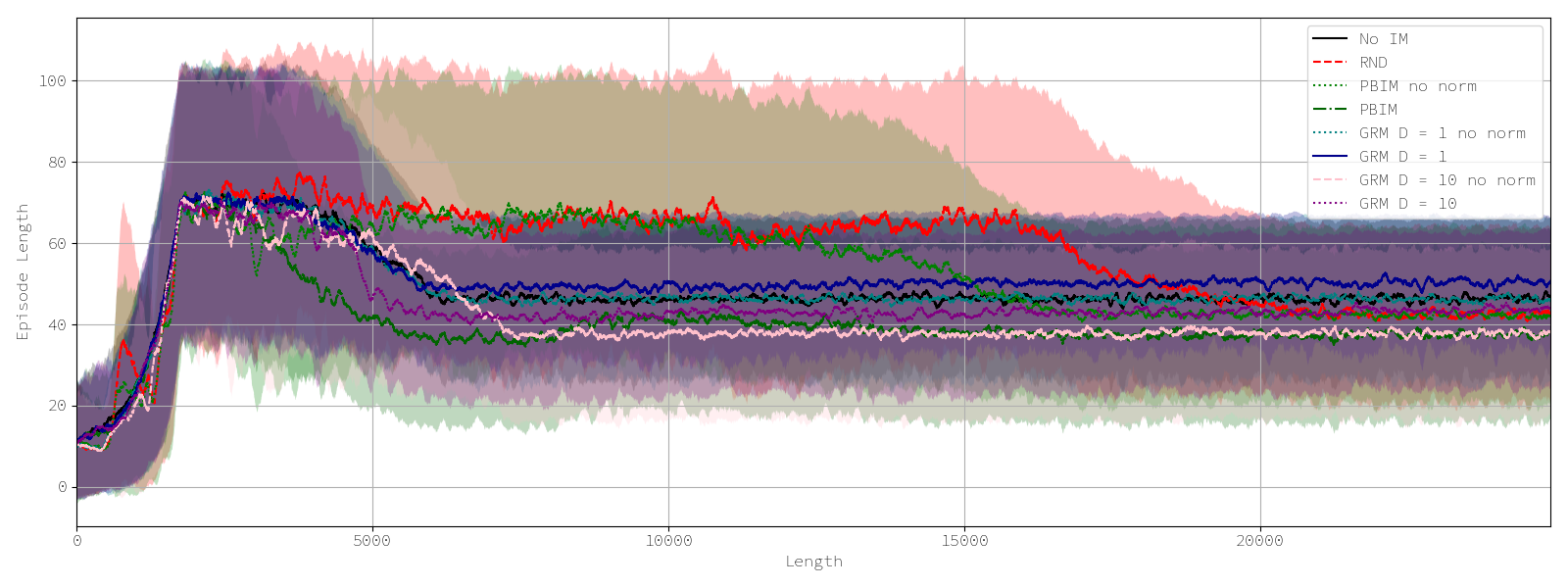}
        \caption{Episode Length}
        \label{fig:lengthlarge}
    \end{subfigure}
    \caption{Average cumulative extrinsic return and episode length for the long cliff walking environment. Error bars are standard deviations over 10 runs.}
    \label{fig:large_cliffwalker}
\end{figure}

\begin{figure}
    \centering
    \begin{subfigure}{\columnwidth}
        \includegraphics[width=\columnwidth]{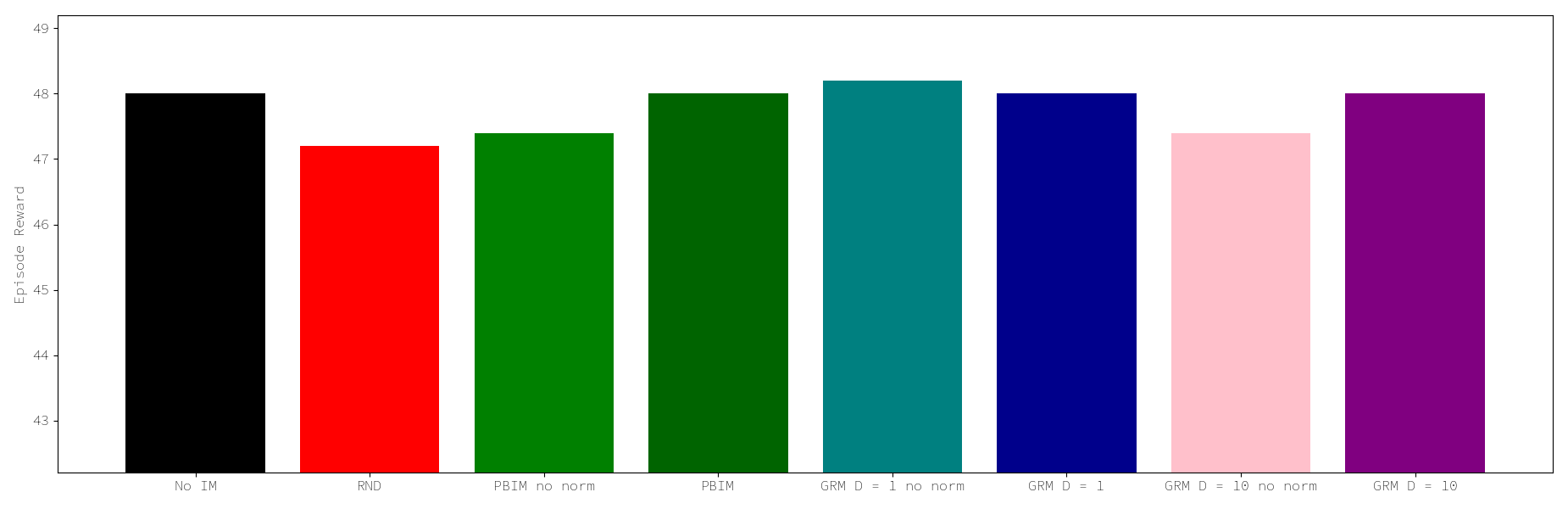}
        \caption{Average Episode Return}
        \label{fig:returnlarge_test}
    \end{subfigure}
    
    \begin{subfigure}{\columnwidth}
        \includegraphics[width=\columnwidth]{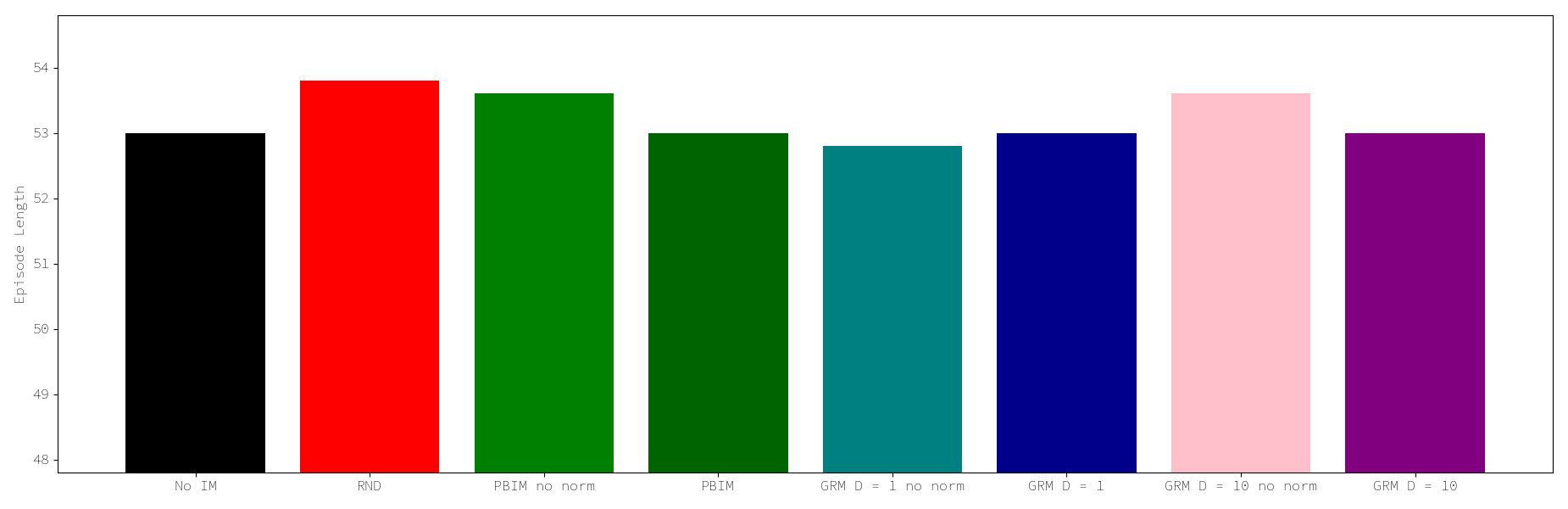}
        \caption{Episode Length}
        \label{fig:lengthlarge_test}
    \end{subfigure}
    \caption{Average cumulative extrinsic return and episode length for the final policies in the long cliff walking environment. Results are averaged over 10 runs per trained agent.}
    \label{fig:large_cliffwalker_test}
\end{figure}

Figure~\ref{large-Q_values} shows the policies for all trained agents, built picking the most frequently picked action across the 10 runs. Two policies reach optimality: PBIM and GRM with $D = 10$ and no normalization. All other models have slight inefficiencies that cause them to go into the middle path at times.

\begin{figure}[t]
    \begin{subfigure}{0.49\columnwidth}
        \centering
        \includegraphics[width=\columnwidth]{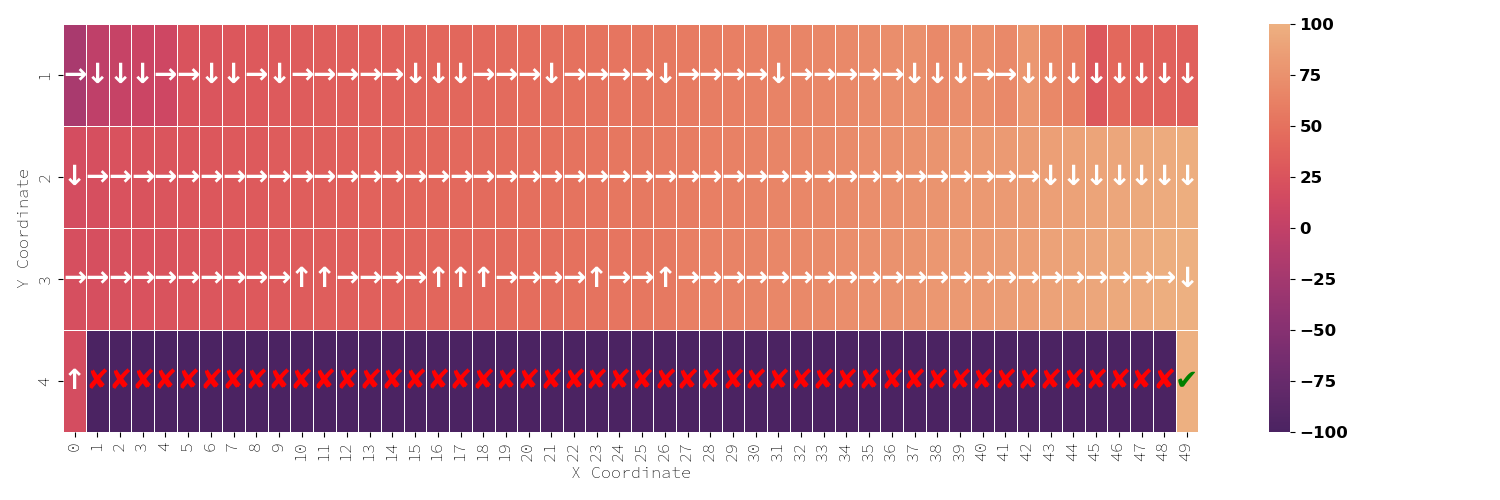}
        \caption{No IM}
        \label{fig:large-noim}
    \end{subfigure}
    \begin{subfigure}{0.49\columnwidth}
        \centering
        \includegraphics[width=\columnwidth]{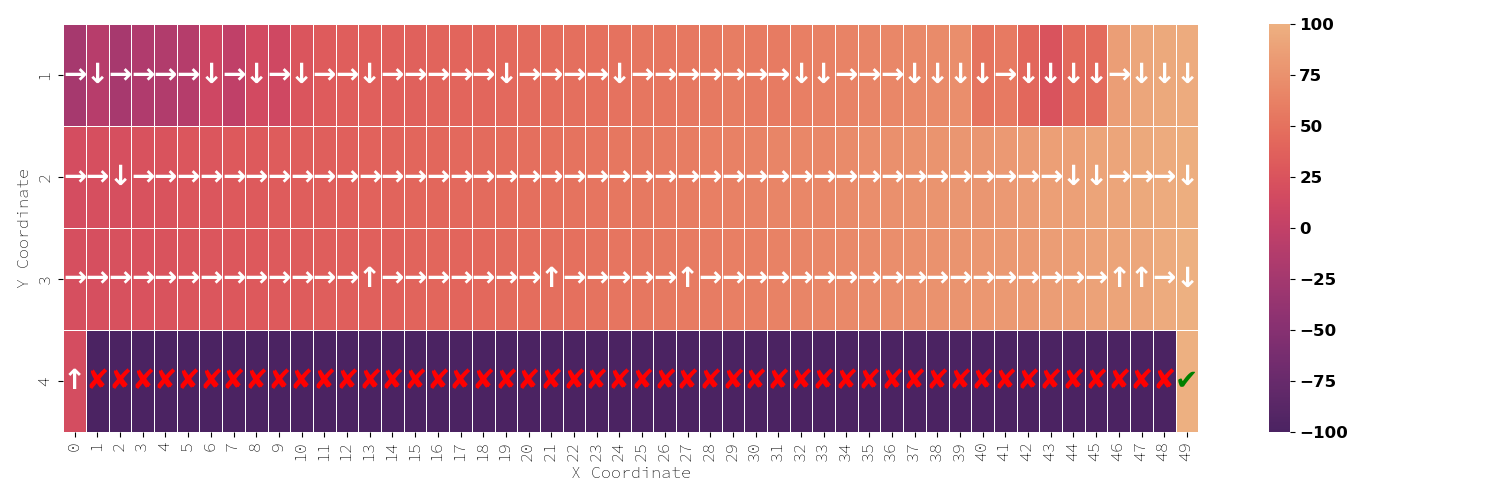}
        \caption{RND}
        \label{fig:large-rnd}
    \end{subfigure}
    \begin{subfigure}{0.49\columnwidth}
        \centering
        \includegraphics[width=\columnwidth]{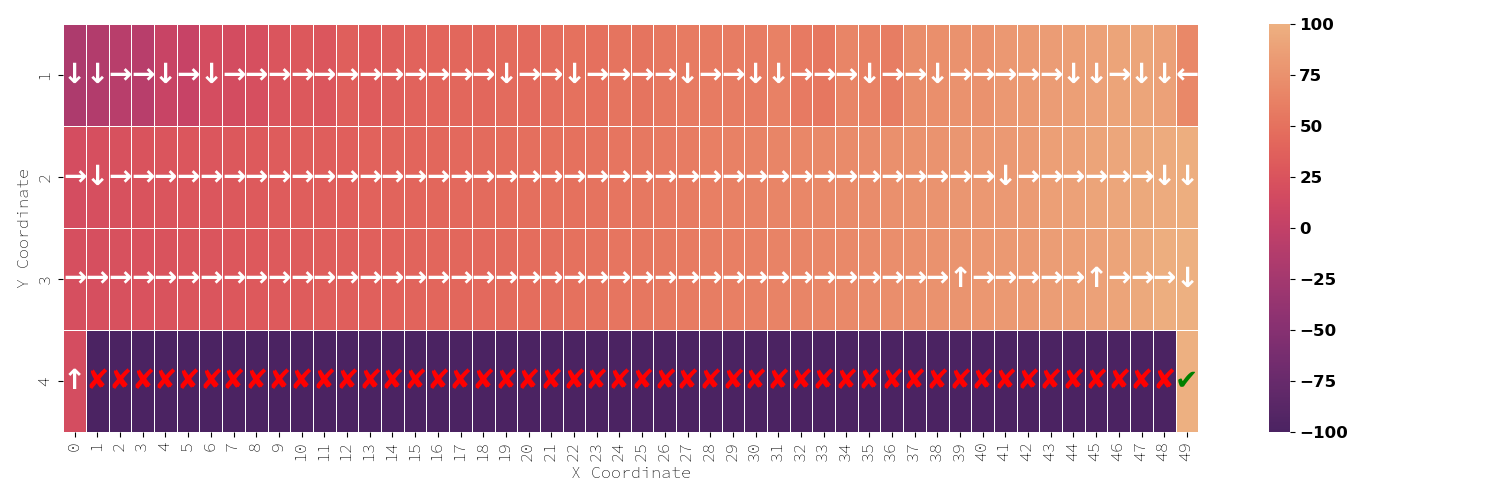}
        \caption{PBIM, not normalized}
        \label{fig:large-pbimnonorm}
    \end{subfigure}
    \begin{subfigure}{0.49\columnwidth}
        \centering
        \includegraphics[width=\columnwidth]{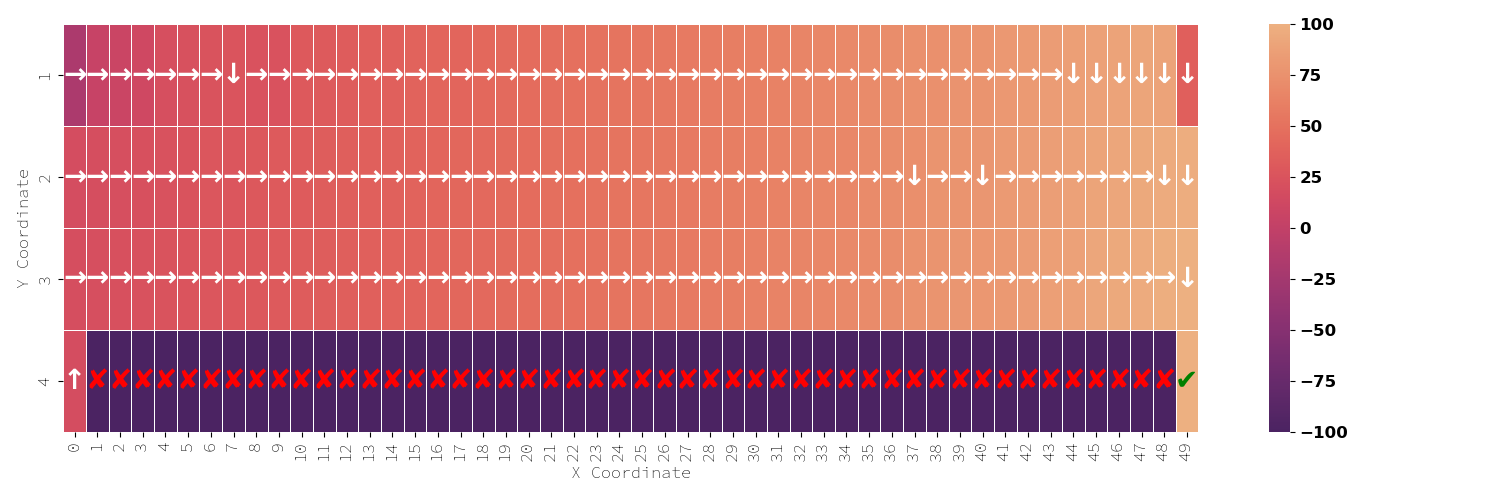}
        \caption{PBIM}
        \label{fig:large-pbim}
    \end{subfigure}
    \begin{subfigure}{0.49\columnwidth}
        \centering
        \includegraphics[width=\columnwidth]{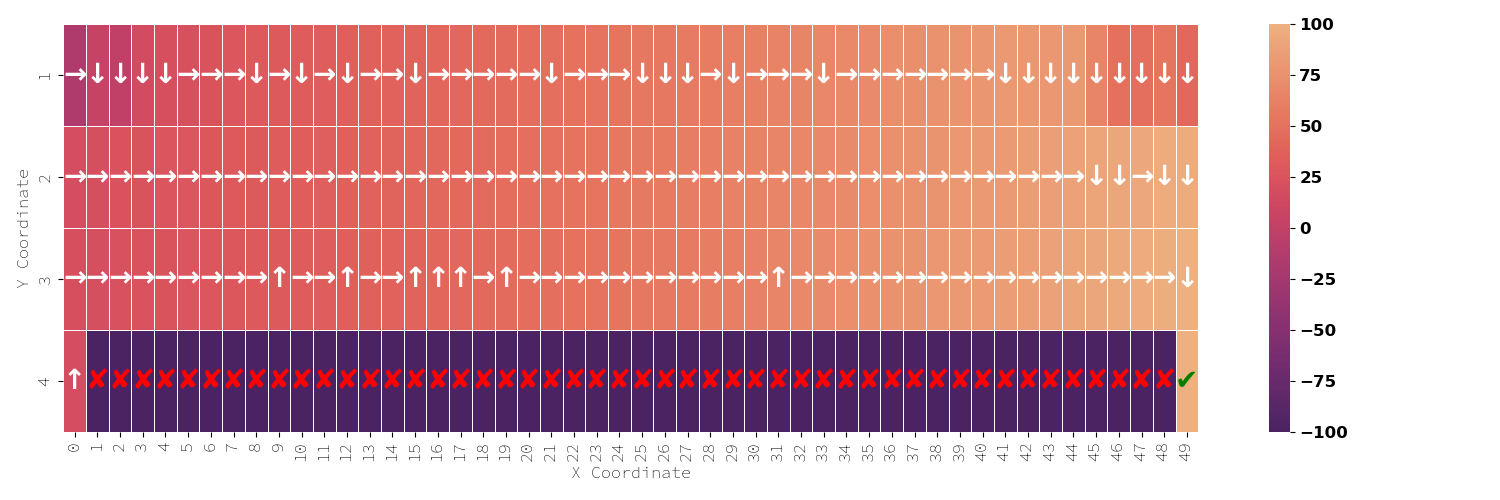}
        \caption{GRM $D = 1$, not normalized}
        \label{fig:large-grm1nonorm}
    \end{subfigure}
    \begin{subfigure}{0.49\columnwidth}
        \centering
        \includegraphics[width=\columnwidth]{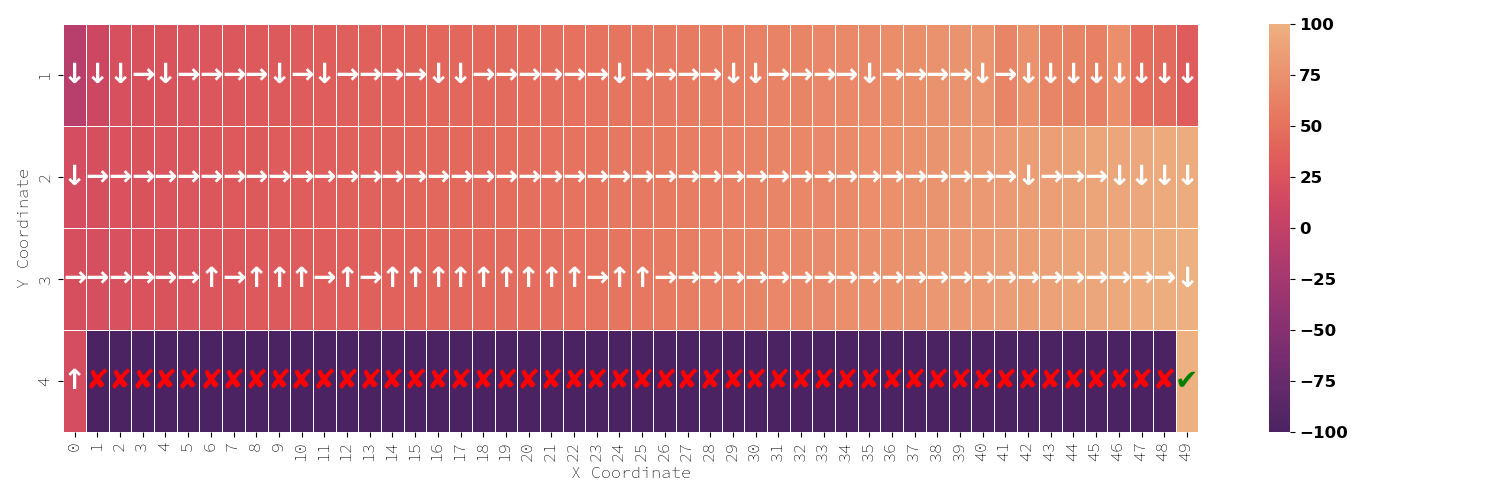}
        \caption{GRM $D = 1$}
        \label{fig:large-grm1}
    \end{subfigure}
    \begin{subfigure}{0.49\columnwidth}
        \centering
        \includegraphics[width=\columnwidth]{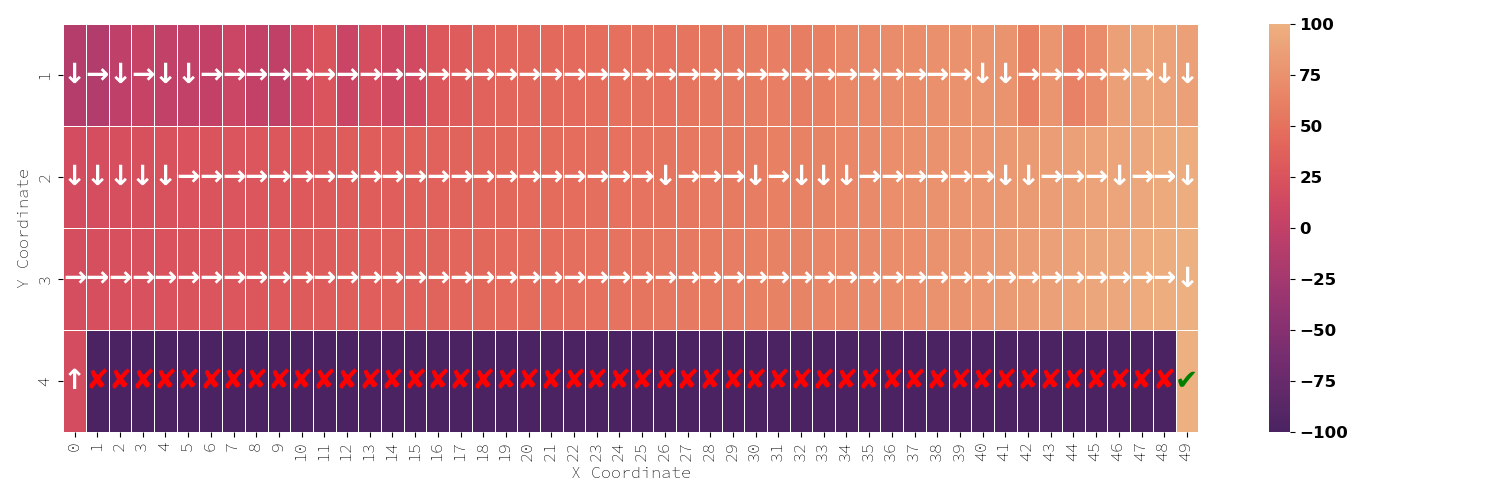}
        \caption{GRM $D = 10$, not normalized}
        \label{fig:large-grm10nonorm}
    \end{subfigure}
    \begin{subfigure}{0.49\columnwidth}
        \centering
        \includegraphics[width=\columnwidth]{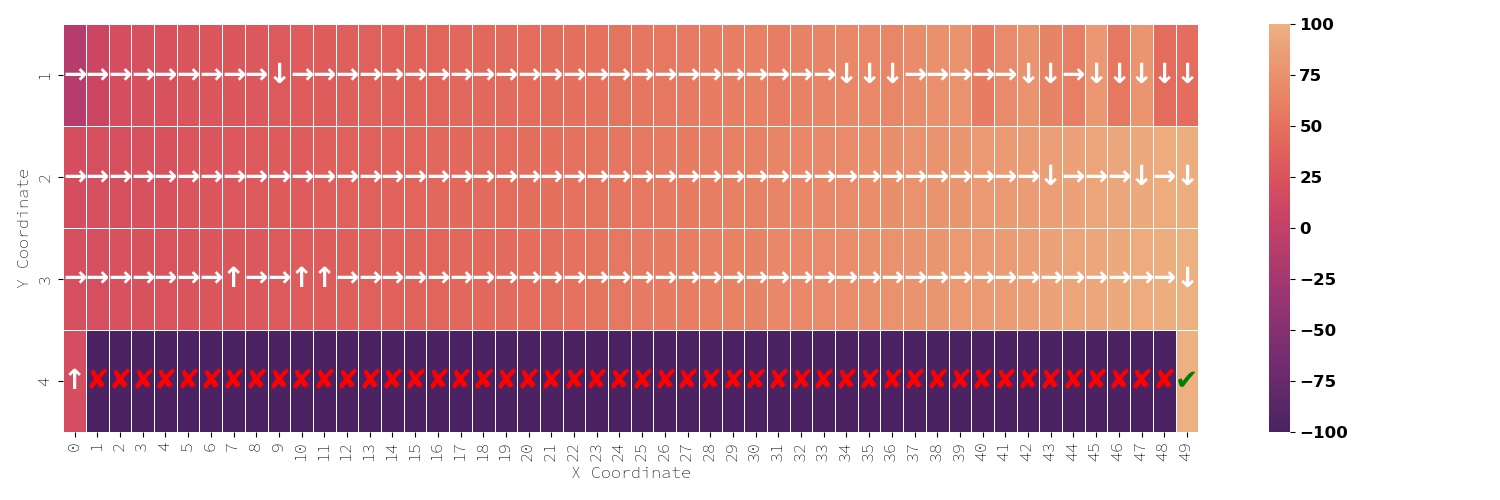}
        \caption{GRM $D = 10$}
        \label{fig:large-grm10}
    \end{subfigure}
    \caption{Final policies of trained agents and their estimated Q-values on the long cliff walking environment. Arrows indicate the action with the highest estimated Q-value in each position. A brighter hue indicates a higher Q-value.}
    \label{large-Q_values}
\end{figure}

\section{Conclusion}

We've extended PBRS to a more general class of reward functions than has been covered previously in the literature, and proven that important theoretical guarantees---namely, the preservation of the set of optimal policies for the underlying environment---still hold. We have also provided a computationally efficient and effective class of ``plug-and-play'' methods to convert most SOTA IM methods into this optimality-preserving form, proven our methods' generality, and demonstrated their efficacy at both preventing IM reward hacking and, in some circumstances, accelerating training.

\vskip 0.2in
\bibliography{main}

\end{document}